\documentclass{article}
\usepackage[utf8]{inputenc}
\usepackage[margin=0.75in]{geometry}

\usepackage{amsfonts}
\usepackage{amsmath}
\usepackage{amssymb}
\usepackage{amsthm}
\usepackage{mathtools}
\usepackage{nicematrix}

\usepackage{rotating}
\usepackage{graphicx}
\usepackage{subfigure}
\usepackage{color}
\usepackage{tikz-cd}
\usetikzlibrary{arrows,backgrounds}
\usepackage{verbatim}
\usepackage{hyperref}
\usepackage[]{algorithm2e}
\usepackage{bm}

\usepackage{cleveref}

\usepackage{thmtools}
\usepackage{thm-restate}

\usepackage{multirow}

\newtheorem{lemma}{Lemma}

\theoremstyle{definition}
\newtheorem{definition}{Definition}

\theoremstyle{remark}
\newtheorem*{remark}{Remark}

\newcommand{\R}{\mathbb{R}}
\newcommand{\norm}[1]{\left \lVert #1 \right \rVert}

\newcommand{\argmin}{\text{arg min}}

\title{Discovery of Probabilistic Dirichlet-to-Neumann Maps on Graphs}
\author{Adrienne M. Propp, Jonas A. Actor, Elise Walker, \\ Houman Owhadi,  Nathaniel Trask, Daniel M. Tartakovsky}

\begin{document}
\maketitle

\begin{abstract}
Dirichlet-to-Neumann maps enable the coupling of multiphysics simulations across computational subdomains by ensuring continuity of state variables and fluxes at artificial interfaces. We present a novel method for learning Dirichlet-to-Neumann maps on graphs using Gaussian processes, specifically for problems where the data obey a conservation constraint from an underlying partial differential equation. Our approach combines discrete exterior calculus and nonlinear optimal recovery  to infer relationships between vertex and edge values. This framework yields data-driven predictions with uncertainty quantification across the entire graph, even when observations are limited to a subset of vertices and edges. By optimizing over the reproducing kernel Hilbert space norm while applying a maximum likelihood estimation penalty on kernel complexity, our method ensures that the resulting surrogate strictly enforces conservation laws without overfitting. We demonstrate our method on two representative applications: subsurface fracture networks and arterial blood flow. Our results show that the method maintains high accuracy and well-calibrated uncertainty estimates even under severe data scarcity, highlighting its potential for scientific applications where limited data and reliable uncertainty quantification are critical.
\end{abstract}

\section{Introduction}
Multiphysics simulations are often composed of independent submodules exchanging information or fluxes through shared interfacial states \cite{dvorak2005unifying}. A modular ``systems-of-systems'' perspective is therefore attractive for the prediction, design, and control of large-scale systems whose sheer complexity renders traditional simulation intractable. For example, the Energy Exascale Earth System Model (E3SM) forecasts planetary climate dynamics by coupling submodels representing the atmosphere, ocean, sea and land ice, and terrestrial systems. These submodels interact by exchanging mechanical and transport fluxes determined by their physical states \cite{leung2020introduction}. Similarly, embedded electronic systems rely on individual circuit components that exchange voltage and current information, enabling the construction of integrated systems made up of billions of transistors.

This exchange of state across subsystem boundaries can be formalized using \textit{Dirichlet-to-Neumann (D2N) maps}. These maps arise in linear elliptic boundary value problems, where solving a partial differential equation (PDE) with given Dirichlet boundary conditions yields corresponding fluxes at those boundaries \cite{natarajan1995domain}. D2N maps thus provide a natural abstraction for characterizing interactions between submodels sharing common interfaces. Many established domain decomposition techniques---including mortar methods \cite{arbogast2007multiscale,bernardi1993domain}, FETI methods \cite{farhat2001feti}, and hybridizable methods \cite{cockburn2009unified}---can be viewed within the D2N framework. The rigorous theoretical foundations underpinning traditional D2N maps offer convergence and stability guarantees when coupling large legacy simulation codes \cite{jiang2024structure,sockwell2020interface}.
However, the evaluation of a D2N map traditionally requires computationally intensive solves of the governing equations within each subdomain, which quickly becomes prohibitively expensive in multi-physics and multiscale settings.

Recent efforts have therefore turned to data-driven schemes that replace costly subdomain models with a learned approximation of the D2N map \cite{bochev2024dynamic,hall2021ginns,jiang2024structure}. While such surrogates dramatically reduce computational cost, they generally lack the theoretical guarantees of traditional D2N methods. The absence of rigorous a-priori error or stability guarantees for data-driven D2N surrogates poses significant challenges for verification, validation, and reliable deployment of learned models.

We address this gap by combining discrete exterior calculus (DEC) \cite{TRASK2022110969} with computational graph completion (CGC) --- a generalization of the optimal recovery problem (ORP) to the nonlinear graph setting \cite{owhadi2022computational}. Our framework learns flux-state relationships on graphs while producing analytically and computationally tractable posterior distributions that characterize uncertainty in the D2N map. Together, CGC and DEC deliver an efficient optimization strategy along with closed-form expressions for prediction error and out-of-distribution inference, re-introducing rigorous guarantees into data-driven D2N coupling.

The remainder of this paper is organized as follows. \Cref{sec:optimal_recovery} reviews the standard ORP formulation and its connection to Gaussian processes; \Cref{sec:DEC} summarizes the DEC framework for enforcing conservation on graphs; \Cref{sec:formulation} derives our coupled graph-based ORP-DEC algorithm and uncertainty quantification approach; and \Cref{sec:results} presents numerical experiments on subsurface fracture networks and arterial flow to demonstrate the application of our approach.

\section{Optimal recovery on graphs via Gaussian processes}\label{sec:optimal_recovery}

Our problem of learning a functional relationship from limited and potentially noisy data fits broadly within the framework of the optimal recovery problem (ORP). In mathematics, ORP involves finding the best approximation of an unknown function from limited observations. Given data generated by an unknown true function $f$, which we assume lies in some space $\mathcal{F}$ endowed with norm $\|\cdot\|$, classical ORP identifies the candidate $\hat{f}\in\mathcal{F}$ that minimizes the worst-case relative error:
\begin{align}
\min_{\hat{f}\in\mathcal{F}}\max_{f\in\mathcal{F}} \frac{\|f-\hat{f}\|^2}{\|f\|^2}.
\end{align}
This topic is well-explored in the literature \cite{donoho_statistical_1994,micchelli_survey_1977, Micchelli1976, Owhadi_Scovel_2019}. Variations of this problem include sensor reconstruction problems (e.g., \cite{reconstruction}) and boundary value inverse problems (e.g.,~\cite{ibvp_2000}). More recently, the computational graph completion (CGC) framework \cite{owhadi2022computational} used Gaussian processes (GPs) to generalize the ORP to the graph setting, where constraints may be nonlinear.

GPs offer a powerful probabilistic framework for approaching the optimal recovery problem. A GP defines a prior distribution over a space of functions that can be updated with observed data to obtain a posterior distribution. The posterior mean provides the optimal recovery of the unknown function given the data, while the posterior covariance quantifies uncertainty. This makes GPs particularly useful for applications where both function approximation and uncertainty quantification are crucial. In this section, we provide a brief overview of Gaussian process regression, but we refer the interested reader to \cite{rasmussen_gaussian_2008} for a more comprehensive but accessible treatment.

A Gaussian process is a collection of random variables such that any finite subset follows a multivariate normal distribution. Concretely, a Gaussian process $f(\mathbf{x})$ is specified by its mean and covariance functions, which we denote by $\mu(\mathbf{x})$ and $K(\mathbf{x},\mathbf{x}'),$ respectively:
\begin{align}
    f(\mathbf{x})&\sim GP \big(\mu(\mathbf{x}),K(\mathbf{x},\mathbf{x}')\big)\\
    \mu(\mathbf{x})&=\mathbb{E}[f(\mathbf{x})]\\
    K(\mathbf{x},\mathbf{x}')&=\mathbb{E}[\big(f(\mathbf{x})-\mu(\mathbf{x})\big)\big(f(\mathbf{x}')-\mu(\mathbf{x}')\big)].
\end{align}
Given observations $(\textbf{X},\textbf{Y})=(x_i,y_i)_{i=1}^{N_{data}}\subset\mathcal{X}\times\mathcal{Y},$ where $f(x_i)=y_i$, the joint distribution of the observed outputs $\textbf{Y}$ and unobserved outputs $f(\textbf{x})$ is:
\begin{align}
\begin{bmatrix}
f(\textbf{x}) \\
\mathbf{Y}
\end{bmatrix}
\sim \mathcal{N}\left(
\mathbf{0},
\begin{bmatrix}
K(\textbf{x},\textbf{x}) & K(\textbf{x}, \textbf{X}) \\
K(\textbf{X}, \textbf{x}) & K(\textbf{X},\textbf{X})
\end{bmatrix}
\right).
\end{align}
Conditioning on the observed data yields the posterior distribution for $f(\textbf{x})$:
\begin{align}
    \begin{split}
    f(\mathbf{x})|\textbf{x},\textbf{X},\textbf{Y}\sim \mathcal{N} \big(&K(\mathbf{x},\mathbf{X})K(\mathbf{X},\mathbf{X})^{-1}\textbf{Y},\\
    &K(\mathbf{x},\mathbf{x})-K(\mathbf{x},\mathbf{X})K(\mathbf{X},\mathbf{X})^{-1}K(\mathbf{X},\mathbf{x})\big).
    \end{split}
\end{align}
If, instead, we assume that our observations contain some noise---that is, that $f(x_i)=y_i+\epsilon_i$ with Gaussian noise $\epsilon\sim\mathcal{N}(0,\sigma_{\epsilon}^2)$---our posterior distribution becomes:
\begin{align}
    \begin{split}
    f(\mathbf{x})|\textbf{x},\textbf{X},\textbf{Y}\sim \mathcal{N} \big(&K(\mathbf{x},\mathbf{X})[K(\mathbf{X},\mathbf{X})+\sigma_{\epsilon}^2I]^{-1}\textbf{Y},\\
    &K(\mathbf{x},\mathbf{x})-K(\mathbf{x},\mathbf{X})[K(\mathbf{X},\mathbf{X})+\sigma_{\epsilon}^2I]^{-1}K(\mathbf{X},\mathbf{x})\big).
    \end{split}
\end{align}
The noisy observation setting is more common in practice and is the one we adopt for the remainder of this work. Note that the posterior covariance consists of a positive term subtracted from the prior covariance $K(\mathbf{x},\mathbf{x})$, representing the reduction in uncertainty gained from the observations.

A GP is characterized by a covariance (kernel) function $K(\cdot,\cdot)$, which must be symmetric and positive-definite. According to the Moore–Aronszajn Theorem, each kernel $K(\cdot,\cdot)$ uniquely induces a corresponding reproducing kernel Hilbert space (RKHS) $\mathcal{H}_{K}$ \cite{aronszajn_theory_1950}. This RKHS contains precisely the functions spanned by the GP kernel, and functions drawn from the GP prior belong to $\mathcal{H}_{K}$ with high probability. Working in $\mathcal{H}_{K}$ therefore yields solutions that are regularized according to the smoothness properties inherited from $K(\cdot,\cdot)$. Thus, prior information about the nature of the true function $f$ can be incorporated into the choice of kernel. The squared exponential RBF kernel:
\begin{equation}
K(x,y) = \exp \left( - \frac{ \norm{x-y}_2^2}{\ell} \right),\label{eq:RBF}
\end{equation}
is a popular choice, where the lengthscale parameter $\ell$ determines the characteristic distance over which the function values remain strongly correlated. This is the kernel we adopt, due to its flexibility and ability to capture a wide range of functional forms.  However, depending on the problem setting, alternatives such as Mat\'ern, linear, Ornstein-Uhlenbeck kernels may also be appropriate. 

In the (nonlinear) optimal recovery setting, restricting our search space to the induced RKHS $\mathcal{H}_{K}$ translates to choosing the minimal-norm solution under kernel-based regularization. Minimizing over the RKHS' accompanying norm $\norm{\cdot}_K$ yields the smoothest function that fits our data according to the smoothness assumptions encoded in $K(\cdot,\cdot)$. Specifically, given data $(\mathbf{X},\mathbf{Y})$ and a kernel-induced RKHS $\mathcal{H}_{K}$ with norm $\norm{\cdot}_K$, we solve the following minimization problem:
\begin{equation}\label{eq:GP_recovery_noiseless}
\begin{split}
	\min_{f \in \mathcal{H}_K}  &\norm{f}_K^2\\
\text{s.t.}\qquad  &f(\mathbf{X})=\mathbf{Y},
 \end{split}
\end{equation}
in the noiseless case, and 
\begin{equation} \label{eq:GP_recovery_noisy}
	\min_{f \in \mathcal{H}_K}  \norm{f}_K^2 +\frac{1}{\sigma_{\epsilon}^2} \norm{ f\left( \mathbf{X} \right) - \mathbf{Y} }_2^2
\end{equation}
in the presence of noise. By the Representer Theorem, the solution to this problem lies in the span of kernel evaluations at the data points, dramatically simplifying the problem \cite{scholkopf_generalized_2001}. Indeed, the GP posterior mean is exactly the minimizer of the RKHS-regularized problem, ensuring both smoothness and consistency with the data.
In \Cref{thm:inner_min}, we state the theorem that yields a closed-form solution for this minimization problem.

\begin{restatable}[]{thm_restate}{representer}
\label{thm:inner_min}
Consider the GP recovery problem
\begin{equation} 
	\ell(\mathbf{X}; \mathbf{Y}) = \min_{f \in \mathcal{H}_K}  \norm{f}_K^2 +\frac{1}{\sigma_{\epsilon}^2} \norm{ f\left( \mathbf{X} \right) - \mathbf{Y} }_2^2.
\end{equation}
Then, the solution to the minimization problem for $\ell(\mathbf{X}; \mathbf{Y})$ is given by
\begin{equation}
\ell(\mathbf{X}; \mathbf{Y}) =  \mathbf{Y}^T \left(K(\mathbf{X},\mathbf{X})+ \sigma_{\epsilon}^2 I\right)^{-1} \mathbf{Y}.
\end{equation}
\end{restatable}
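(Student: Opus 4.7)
The plan is to use the Representer Theorem (cited just before the statement) to reduce the infinite-dimensional minimization over $\mathcal{H}_K$ to a finite-dimensional problem over a coefficient vector $\boldsymbol{\alpha} \in \mathbb{R}^{N_{\text{data}}}$, then solve that quadratic problem in closed form and simplify the resulting expression using a matrix identity.

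First I would invoke the Representer Theorem to write the minimizer as $f^\star(\cdot) = \sum_{i=1}^{N_{\text{data}}} \alpha_i K(\cdot, x_i)$, so that the RKHS norm becomes $\|f^\star\|_K^2 = \boldsymbol{\alpha}^T K(\mathbf{X},\mathbf{X}) \boldsymbol{\alpha}$ and the sample vector becomes $f^\star(\mathbf{X}) = K(\mathbf{X},\mathbf{X}) \boldsymbol{\alpha}$. Writing $K \equiv K(\mathbf{X},\mathbf{X})$ for brevity, the objective reduces to the finite-dimensional quadratic
\begin{equation*}
J(\boldsymbol{\alpha}) = \boldsymbol{\alpha}^T K \boldsymbol{\alpha} + \tfrac{1}{\sigma_\epsilon^2}\|K\boldsymbol{\alpha} - \mathbf{Y}\|_2^2.
\end{equation*}
Setting $\nabla_{\boldsymbol{\alpha}} J = 0$ gives $K\bigl[(K + \sigma_\epsilon^2 I)\boldsymbol{\alpha} - \mathbf{Y}\bigr] = 0$, and since $K + \sigma_\epsilon^2 I$ is positive definite (so invertible) for any $\sigma_\epsilon^2 > 0$, the unique admissible solution is $\boldsymbol{\alpha}^\star = (K + \sigma_\epsilon^2 I)^{-1} \mathbf{Y}$.

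Substituting $\boldsymbol{\alpha}^\star$ back, the residual term simplifies via
\begin{equation*}
K\boldsymbol{\alpha}^\star - \mathbf{Y} = \bigl[K(K+\sigma_\epsilon^2 I)^{-1} - I\bigr]\mathbf{Y} = -\sigma_\epsilon^2 (K+\sigma_\epsilon^2 I)^{-1}\mathbf{Y},
\end{equation*}
and combining with $\|f^\star\|_K^2 = \mathbf{Y}^T (K+\sigma_\epsilon^2 I)^{-1} K (K+\sigma_\epsilon^2 I)^{-1} \mathbf{Y}$ gives
\begin{equation*}
J(\boldsymbol{\alpha}^\star) = \mathbf{Y}^T (K + \sigma_\epsilon^2 I)^{-1}\bigl[K + \sigma_\epsilon^2 I\bigr](K + \sigma_\epsilon^2 I)^{-1}\mathbf{Y} = \mathbf{Y}^T (K + \sigma_\epsilon^2 I)^{-1} \mathbf{Y},
\end{equation*}
which is precisely the claimed value of $\ell(\mathbf{X};\mathbf{Y})$.

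The main obstacle is a conceptual rather than computational one: the Representer Theorem must genuinely apply to this objective, meaning the data-fidelity term must depend on $f$ only through the pointwise evaluations $f(\mathbf{X})$ (it does) and the regularizer must be a strictly increasing function of $\|f\|_K$ (it is). A secondary subtlety is that the stationarity condition only forces $(K+\sigma_\epsilon^2 I)\boldsymbol{\alpha} - \mathbf{Y}$ to lie in $\ker K$, which leaves $\boldsymbol{\alpha}$ non-unique when $K$ is rank-deficient; however, any choice in that affine solution set produces the same function $f^\star$ on $\mathbf{X}$ and the same objective value, so the closed form for $\ell(\mathbf{X};\mathbf{Y})$ is unambiguous, and the algebra above goes through verbatim.
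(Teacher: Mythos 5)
Your proposal is correct and follows essentially the same route as the paper's proof: invoke the Representer Theorem to reduce to a finite-dimensional quadratic in the coefficient vector, solve the stationarity condition to get $(K+\sigma_\epsilon^2 I)^{-1}\mathbf{Y}$, and substitute back. Your final simplification (collapsing the residual to $-\sigma_\epsilon^2(K+\sigma_\epsilon^2 I)^{-1}\mathbf{Y}$ before recombining) is cleaner than the paper's repeated add-and-subtract of $\sigma_\epsilon^2 I$, and your remark on the non-uniqueness of $\boldsymbol{\alpha}$ when $K$ is rank-deficient addresses a point the paper glosses over, but these are refinements of the same argument rather than a different one.
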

We postpone the proof of \Cref{thm:inner_min} to \Cref{app:RKHS}.

Thus, rather than searching over an entire infinite-dimensional function space, we can simply take a linear combination of the kernel function evaluated at the training points. This significantly reduces computational complexity and enables efficient function recovery even for large-scale problems.

\section{Discrete exterior calculus}\label{sec:DEC}

While GPs provide a robust framework for function approximation and uncertainty quantification, they do not inherently enforce physical constraints such as conservation laws. Many real-world problems are posed on domains with some geometric structure, where function values must satisfy divergence-free conditions, circulation laws, or other physical principles. In our framework, we leverage the discrete exterior calculus (DEC) to overcome this limitation.

DEC extends the framework of differential geometry and exterior calculus to discrete structures like graphs, providing a precise language through which we can define differential operators (e.g., gradient, divergence, and curl) in a discrete setting \cite{desbrun_kanso_tong_2008,hirani_thesis_2003}. It is particularly well-suited for modeling flows on graphs, where quantities such as mass, heat, or electrical charge must obey conservation laws. In prior work, DEC has been used to learn physics from data \cite{TRASK2022110969}. In our framework, we leverage DEC to constrain our graph-based GP recovery problem, ensuring that the learned functions obey conservation principles across the graphs we consider. This integration yields physically meaningful predictions while maintaining the flexibility of GPs.

In this section, we define the graph gradient and graph divergence, which are discrete analogs to their counterparts in the typical continuous setting.  In these definitions and the remainder of this work, we consider a directed graph $G=(\mathcal{V},\mathcal{E})$, where $\mathcal{V}$ denotes the set of vertices and $\mathcal{E}$ denotes the set of edges. We represent edges $e\in\mathcal{E}$ as ordered pairs of vertices, where an edge $e=(v_a,v_b)$ is considered to be directed from source $v_a$ to target $v_b$. We let $\mathbf{F}\in\mathbb{R}^{|\mathcal{E}|}$ be a discrete 1-form, or edge-valued field, whose entry $\mathbf{F}_e$ is the oriented flux along edge $e$. Similarly, we let $\mathbf{u}\in\mathbb{R}^{|\mathcal{V}|}$ be a discrete 0-form, or vertex-valued field, whose entry $\mathbf{u}_v$ is the scalar potential at vertex $v$.

We first define the incidence matrix, which encodes the topology and connectivity of graph $\mathcal{G}$.
\begin{definition}[Incidence matrix]
    Let $\mathcal{G}=(\mathcal{V},\mathcal{E})$ be a graph consisting of vertices $\mathcal{V}$ and edges $\mathcal{E}$. The incidence matrix $\mathbf{D}_0\in\mathbb{R}^{E\times V}$ encodes the relationships between vertices and edges\footnote{We have adopted an edge-vertex orientation for the incidence matrix. This is consistent with the coboundary viewpoint from DEC, where. The opposite convention, $\delta^\intercal_0$, is also correct.}:
    \begin{equation}
    (\mathbf{D}_0)_{e,v} =
    \begin{cases}
        -1 & \text{if $v$ is the source of edge $e$},\\
        1 & \text{if $v$ is the target of edge $e$}, \\
        0 & \text{otherwise}.
    \end{cases}\\
    \end{equation}
\end{definition}
We next define the graph gradient operator $\delta_0$, which quantifies the change in a scalar quantity or 0-form\footnote{In the field of DEC, it is standard to call a scalar quantity that assigns one value to every vertex a discrete 0‑form (or 0‑cochain). Similarly, a quantity defined on oriented edges --- such as a flux or the discrete gradient --- is a discrete 1‑form (1‑cochain).} between the two endpoints of an edge. The operator $\delta_0$ and incidence matrix $\mathbf{D}_0$ are closely related: $\mathbf{D}_0$ is actually the matrix representation of $\delta_0$, capturing the orientation and ordering of the edges and vertices. 

\begin{definition}[Graph gradient operator]
    Let $v_a$ and $v_b$ be vertices in $\mathcal{V}$ connected by edge $e$. Let $\mathbf{u}_\text{e}=(u_a,u_b)$ represent the data living on the vertices of edge $e=(v_a,v_b)$. The graph gradient operator $\delta_0$ maps values on adjacent vertices to a value on their shared edge. In particular:
    \begin{align}
        \delta_0\mathbf{u}_\text{e}=u_b-u_a.
    \end{align} 
    \end{definition}

We next define the graph divergence, which measures the net flow in or out of a vertex via its incident edges.

\begin{definition}[Graph divergence operator]
    Let $v$ be a vertex in $\mathcal{V}$, and let $\mathbf{F}_e$ denote the flow along edge $e$. The graph divergence operator $\delta_0^\intercal$ maps the flow along edges incident to $v$ to a value on $v$ itself. In particular: 
    \begin{align}
    (\delta_0^\intercal \mathbf{F})(v)=\sum_{e=(v,v_i)}\mathbf{F}_e-\sum_{e=(v_j,v)}\mathbf{F}_e.
    \end{align}
\end{definition}

Thus, if $\mathbf{u}\in\mathbb{R}^{V}$ is a vector of 0-forms (that is, values on the vertices of $\mathcal{G}$), then $\delta_0\mathbf{u}\in\mathbb{R}^{E}$ is a vector of 1-forms (consisting of the graph gradient across each edge). Similarly, if $\mathbf{F}\in\mathbb{R}^{E}$ is a vector of 1-forms (values on the edges of $\mathcal{G}$), then $\delta_0^\intercal\mathbf{F}\in\mathbb{R}^{V}$ returns a vector of 0-forms (consisting of the graph divergence on each vertex).

The orientation of edges is arbitrary but must be consistent throughout the computations.

\section{Computation}\label{sec:formulation}
\subsection{Graph setup}
Let $\mathcal{G} = (\mathcal{V},\mathcal{E})$ be a graph, with $V=\lvert \mathcal{V} \rvert$ vertices and $E=\lvert \mathcal{E} \rvert$ edges. Furthermore, assume that observations are availlable only on subsets of $\mathcal{V}$ and $\mathcal{E}$, so that
\begin{equation} \begin{split}
	\mathcal{V} = \mathcal{V}_\text{un} \cup \mathcal{V}_\text{obs} \qquad \text{and} \qquad
	\mathcal{E} = \mathcal{E}_\text{un} \cup \mathcal{E}_\text{obs},
\end{split} \end{equation}
where the subscript $_\text{un}$ marks unobserved vertices and edges, and $_\text{obs}$ marks observed vertices and edges. 
We denote the cardinalities of these sets as
\begin{equation} \begin{split} 
	\lvert \mathcal{V}_\text{un} \rvert = V_\text{un}, \qquad
	\lvert \mathcal{V}_\text{obs} \rvert = V_\text{obs}, \qquad
	\lvert \mathcal{E}_\text{un} \rvert = E_\text{un}, \qquad \text{and} \qquad
	\lvert \mathcal{E}_\text{obs} \rvert &= E_\text{obs}.
\end{split} \end{equation}
For the observed vertices and edges, we assume we have $N_\text{data}$ independent observations, so that we have vertex observations $\mathbf{u}_\text{obs} \in \R^{  V_\text{obs} \times 
 N_\text{data} }$ and edge observations $\mathbf{F}_\text{obs} \in \R^{ E_\text{obs} \times N_\text{data} }$. We aim to solve for the unknown vertex potentials and edge fluxes, $\mathbf{u}_\text{un} \in \R^{V_\text{un}\times  N_\text{data}   }$ and $\mathbf{F}_\text{un} \in \R^{ E_\text{un} \times N_\text{data} }$, respectively. We denote all vertex and edge quantities as
\begin{equation} \begin{split}
	\mathbf{u} = \begin{bmatrix} \mathbf{u}_\text{obs}  \\ \mathbf{u}_\text{un} \end{bmatrix} \in \R^{V \times N_\text{data}   } \qquad \text{ and } \qquad
	\mathbf{F} = \begin{bmatrix} \mathbf{F}_\text{obs} \\ \mathbf{F}_\text{un} \end{bmatrix} \in \R^{E \times N_\text{data} } 
\end{split} \end{equation}

For each edge $e \in \mathcal{E}$, we construct a GP $f_e$ that maps the vertex values at its endpoints to the flux on $e$, \begin{equation}
    f_e(\mathbf{u}_e) \mapsto \mathbf{F}_e,
\end{equation}
where the edge subscript $e$ selects the appropriate rows of $\mathbf{u}$ and $\mathbf{F}$.
We allow the GP length scale parameter to vary between edges, so $f_e$ is equipped with an individual kernel $K_e$ and belongs to its own reproducing-kernel Hilbert space $\mathcal{H}_{K_e}$.

The form of the kernel input---that is, how we define $x$ and $y$ in \eqref{eq:RBF}---is a design choice. A natural option is the pair of endpoint potentials $x=(u_a,u_b)\in\mathbb{R}^2$ for $e=(v_a,v_b)$. In this case, $f_e(\mathbf{u}_e):\mathbb{R}^2\rightarrow\mathbb{R}$. Alternatively, one may supply the discrete edge gradient, defined as the difference between the values at the endpoints: $x=\delta_0 \mathbf{u}_e=u_{v_a}-u_{v_b}\in\mathbb{R}$. In this case, $f_e(\delta_0\mathbf{u}_e):\mathbb{R}\rightarrow\mathbb{R}$. We use the notation $f_e(\mathbf{u}_e)$ generically, where $f_e$ may represent a convolution of functions, and select the encoding that best matches the application at hand. This is discussed more in \Cref{sec:results}. 

Let $\delta_0 \in \R^{E \times V}$ denote the discrete graph gradient operator and $\delta_0^\intercal \in \R^{V \times E}$ denote the graph divergence operator, as defined in \Cref{sec:DEC}. We impose a global divergence-free conservation constraint given by:
\begin{equation}
	\delta_0 ^\intercal \mathbf{F}  = 0. 
\end{equation}
This global conservation constraint ensures that the modeled flows are physically consistent across the entire graph, effectively inducing a coupling across all GPs $f_e$ for all $e \in \mathcal{E}$.

\subsection{Formulation of the optimization problem}
Our goal is to recover the smoothest set of edge-wise functions $f_e$ that (i) honor the smoothness assumptions encoded by our choice of kernel, (ii) fit the noisy observations, and (iii) satisfy the global conservation law $\delta_0^\intercal\mathbf{F}=0$. Merging the standard GP log-marginal likelihood with the divergence-free constraint expressed via DEC yields the constrained optimisation problem:
\begin{equation} \begin{split} 
	\min_{\theta, \mathbf{u}_\text{un}} \min_{\mathbf{F}_\text{un}} \sum_{e \in \mathcal{E}} &\min_{f_e \in \mathcal{H}_{K_e}}   \norm{ f_e }_{K_e}^2 + \frac{1}{\sigma_{\epsilon}^2} \norm{ f_e(\mathbf{u}_e ) - \mathbf{F}_e }_2^2 + \log \det (K_e(\mathbf{u}_e, \mathbf{u}_e ) + \sigma_{\epsilon}^2 I) \\
	&\text{s.t.}\qquad \delta_0 ^\intercal \mathbf{F} = 0,\\
\end{split}\label{prob:gp-recovery} \end{equation}
where $I$ is the $N_\text{data} \times N_\text{data}$ identity matrix, and $\theta$ represents all learnable parameters, including the edge-dependent length scale $\ell_e$ and overall noise level $\sigma_{\epsilon}^2$. The first term promotes smoothness and regularity  in $f_e$ by minimizing its reproducing kernel Hilbert space (RKHS) norm. The second term penalizes deviations from the observed data, scaled by noise variance $\sigma_{\epsilon}^2$, thus accommodating noise in the observations. The third term penalizes more complex models, discouraging overfitting. By the Representer Theorem for RKHS spaces and the result in \Cref{thm:inner_min}, this is equivalent to the problem:
\begin{equation} \begin{split}
	\min_{\theta, \mathbf{u}_\text{un}} \min_{\mathbf{F}_\text{un}}  \sum_{e \in \mathcal{E}} & \mathbf{F}_e^\intercal (K_e(  \mathbf{u}_e,  \mathbf{u}_e) + \sigma_{\epsilon}^2 I)^{-1} \mathbf{F}_e  + \log \det (K_e( \mathbf{u}_e,  \mathbf{u}_e) + \sigma_{\epsilon}^2 I)\\
	&\text{s.t.} \qquad \delta_0 ^\intercal \mathbf{F} = 0.\\
\end{split} \label{prob:gp-representer} \end{equation}
The inner minimization problem, for fixed $\mathbf{u}_\text{un}$ and $\theta$, is a linearly-constrained quadratic program (LCQP) over $\mathbf{F}_\text{un}$. With the aim of solving this LCQP via the Karush-Kuhn-Tucker (KKT) system, we rewrite this inner problem in a block format and introduce the Lagrange multiplier $\lambda$ to enforce the divergence-free constraint. This minimization sub-problem then becomes:
\begin{equation} \begin{split}
	\min_{F; \lambda} F^\intercal \widehat{K} F + \lambda^\intercal (\widehat{D_0}^\intercal F - b)
\end{split} \label{prob:lcqp} \end{equation}
where
\begin{alignat}{2}\label{eq:matrices}
	F &= \text{vec}(\mathbf{F}_\text{un}) &&\in \mathbb{R}^{ E_{\text{un}}\cdot  N_\text{data}} \\
	\widehat{K} &= \text{diag}(K_e( \mathbf{u}_e,  \mathbf{u}_e) + \sigma_{\epsilon}^2 I)_{e\in\mathcal{E}_{\text{un}}}^{-1} &&\in \mathbb{R}^{( E_{\text{un}}\cdot N_{\text{data}}) \times (E_\text{un}\cdot N_\text{data}) }\\
	D_0 &= {\delta_0}_{\mathcal{E}_\text{un}, \mathcal{V}_\text{un}} &&\in \mathbb{R}^{E_{\text{un}}\times V_\text{un}} \\
	\widehat{D_0} &= D_0 \otimes I &&\in \mathbb{R}^{(E_\text{un}\cdot N_\text{data})\times (V_\text{un}\cdot N_\text{data})}\\
	\mathbf{b} &= - {\delta_0}_{\mathcal{E}_\text{obs}, \mathcal{V}_\text{un}}^\intercal \mathbf{F}_\text{obs}  &&\in \mathbb{R}^{ V_\text{un}\times N_\text{data}}\\
	b &= \text{vec}( \mathbf{b} ) &&\in \mathbb{R}^{V_\text{un} \cdot N_\text{data} } 
\end{alignat}
and $\text{vec}(\cdot)$ is the row-major vectorization of a matrix. Detailed specifications of these matrices are provided in \Cref{app:matrices}.

This LCQP is efficiently solved via the KKT system, which handles the linear constraints imposed by the divergence-free condition:
\begin{equation}
	\begin{bmatrix} \widehat{K} & \widehat{D_0} \\ \widehat{D_0}^\intercal & \mathbf{0} \end{bmatrix} \begin{bmatrix} F \\ \lambda \end{bmatrix} = \begin{bmatrix} 0 \\ b \end{bmatrix}.
\end{equation}
This formulation admits a closed-form solution for $F$ (and thus for $\mathbf{F}_\text{un}$) via back-substitution using the Schur complement:
\begin{equation}\begin{split}
	F &=  \widehat{K}^{-1} \widehat{D_0} \left( \widehat{D_0}^\intercal \widehat{K}^{-1} \widehat{D_0} \right)^{-1} b, \\
	\lambda &= - (\widehat{D_0}^\intercal \widehat{K}^{-1} \widehat{D_0})^{-1} b. \\
\end{split}\end{equation}
After reshaping $F$\footnote{That is, we reshape the final result to undo the vec($\cdot$) operation.} and substituting back into the GP recovery problem in \eqref{prob:gp-representer}, we arrive at the following optimization problem:
\begin{equation} \begin{split}
	\min_{\theta, \mathbf{u}_\text{un}} 
     & \sum_{e \in \mathcal{E}_\text{un}}  \mathbf{b}_e^T\left(\widehat{D_0}^\intercal \widehat{K}^{-1} \widehat{D_0}\right)^{-1} \mathbf{b}_e\\
	+ & \sum_{e \in \mathcal{E}_\text{obs}} \mathbf{F}_e^\intercal \left(\widehat{D_0}^\intercal \widehat{K}^{-1} \widehat{D_0}\right)^{-1} \mathbf{F}_e \\
	+ & \sum_{e \in \mathcal{E}} \log \det \left( K_e( \mathbf{u}_e,  \mathbf{u}_e) + \sigma_{\epsilon}^2 I \right).\label{eq:optimization_final}
\end{split} \end{equation}
This is now a tractable optimization problem: by minimizing over the model parameters $\theta$ and values on the unobserved vertices $\mathbf{u}_\text{un}$, we obtain predictions across the entire graph that obey both the boundary data and the global conservation constraint.

\subsection{Training details}\label{sec:training}

We minimize the MLE-based objective in \eqref{eq:optimization_final} with Adam \cite{KingmaBa2014}, a first-order optimizer capable of handling nonconvex, nonlinear optimization problems. We use an initial learning rate of $10^{-3}$ and a StepLR scheduler that scales the learning rate by 0.98 every 10,000 epochs.

The trainable model parameters $\theta$ consist of edge-specific length scale parameters $\ell_e$ for all $e\in E,$ and a global noise variance $\sigma_{\epsilon}^2.$ To enforce positivity, we optimize the logarithms $\tilde{\ell}_e=\log{\ell_e}$ and $\tilde{\sigma}^2_{\epsilon}=\log{\sigma^2_{\epsilon}}$, and exponentiate them inside the computational graph. We initialize each $\ell_e$ to the median pairwise distance between training inputs on that edge and $\sigma_{\epsilon}^2$ to $\exp(-20)$, or approximately $2e-9$.

\subsection{Inference}\label{sec:inference}

Solving the GP recovery problem provides a means of extrapolating to new boundary data that were not seen during training. Given test observations $\textbf{u}'_\text{obs}$ and $\mathbf{F}'_\text{obs}$ that were not seen during training, our goal is to compute $\mathbf{F}'_\text{un}$ by evaluating the trained GPs $f_e$ on $\mathbf{u}'_e$ for all $e\in\mathcal{E}$. However, in order to evaluate $f_e(\mathbf{u}'_\text{un}),$ we must first initialize the unknown vertex values, $\mathbf{u}'_\text{un}$. We do this randomly or, when possible, with an informed guess. In the absence of more detailed prior information, we guess the unknown vertex values in a consistent way by solving the linear system
\begin{align}\label{eq:extrapolate}
\delta_0^\intercal f_e(\mathbf{u}')=0,
\end{align}
for $\mathbf{u}'_\text{un}$ using a root-finding algorithm, such as Newton's method. This approach ensures that we recover a solution that obeys our global conservation law, such that the net flux on any interior vertex is zero. From there, we can evaluate
\begin{align}
    \mathbf{F}'=f_e(\mathbf{u}'),
\end{align}
for $\mathbf{F}'_\text{un}$. Thus, we are able to recover a mean posterior estimate for the values on all edges for any given boundary data.

What we ultimately obtain is a global Dirichlet-to-Neumann map. Note that we do not claim to recover the true data on the interior of the graph. Rather, our approach yields the RKHS-minimal model, which is in essence the smoothest physical model that is still consistent with the given boundary data. By solving the forward model, we infer internal voltages that obey the boundary data, allowing for interpolation of voltages and currents.

We impose physical constraints when performing Newton's method to ensure that interior vertex potentials lie between zero and the maximum boundary potential. Since solutions are, in general, non-unique depending on the numerical tolerance and level of noise assumed in the data, this type of constraint proves helpful in obtaining physically meaningful predictions.

\subsection{Error estimates}\label{sec:error}

A core strength of our approach is the rigorous uncertainty quantification made possible by working with GPs. In \Cref{lem:error}, we establish formal bounds on the worst-case error between the minimizer of our optimization problem $\hat{f}$ and the true function $f$ governing the dynamics of a given edge. Following standard practice in (nonlinear) optimal recovery, we assume that $f$ lies in the RKHS space associated with our chosen kernel $K(\cdot,\cdot)$ \cite{azarnavid_reproducing_2019,fiedler_practical_2023,kakade_worst-case_2005,leung2020introduction,reed_error_2024,zhou_gaussian_2023}. In other words, we assume that $f$ obeys the smoothness and structural properties encoded by $K(\cdot,\cdot),$ and can be approximated arbitrarily well in the RKHS norm by finite linear combinations of kernel sections $K(x_i,\cdot).$

\begin{restatable}[]{thm_restate}{error}\label{lem:error}
Let $\Omega\subseteq\mathbb{R}^d$, let $K:\Omega\times\Omega\rightarrow\mathbb{R}$ be a positive definite kernel with associated RKHS $\mathcal{H}_{K}$, and let $f$ be a true function from which we obtain data $(\mathbf{X},\mathbf{Y})=(x_i,y_i)_{i=1,...,N}$, where $y=f(x)+\epsilon$ for $\epsilon\sim\mathcal{N}(0,\sigma_{\epsilon}^2)$. If $\hat{f}\in\mathcal{H}_K$ is the minimizer of \eqref{prob:gp-recovery}, or the regressor to $f$ on $(\mathbf{X},\mathbf{Y})$, then we have:
\begin{align}
    \text{MSE}(x)\leq \sigma(x)^2 \norm{f}^2_{\mathcal{H}_K}+\sigma_{\epsilon}^2\norm{K(x,\mathbf{X})[K(\mathbf{X},\mathbf{X})+\sigma_{\epsilon}^2 I]^{-1}}_2^2,
\end{align}
for all $x\in\Omega,$ where $\sigma(x)^2=K(x,x)-K(x,\mathbf{X})[K(\mathbf{X},\mathbf{X})+\sigma_{\epsilon}^2 I]^{-1}K(\mathbf{X},x)$ is the conditional variance of the GP posterior at point $x$. We can also bound the pointwise absolute error by:
\begin{align}
    \lvert f(x)-\hat{f}(x)\rvert\leq \sigma(x)\norm{f}_{\mathcal{H}_K} +\sigma_{\epsilon}\norm{K(x,\mathbf{X})[K(\mathbf{X},\mathbf{X})+\sigma_{\epsilon}^2 I]^{-1}}_2\sqrt{2\log(2/\delta)},
\end{align}
with probability at least $1-\delta$.
\end{restatable}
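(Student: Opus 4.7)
The plan is to decompose the pointwise error $f(x)-\hat f(x)$ into a deterministic approximation part and a random noise part, then bound each separately. Writing $\mathbf{Y} = \mathbf{f}(\mathbf{X}) + \boldsymbol{\epsilon}$ with $\mathbf{f}(\mathbf{X})=(f(x_i))_{i=1}^N$, set $A = K(\mathbf{X},\mathbf{X})+\sigma_\epsilon^2 I$ and $\alpha(x)^\intercal = K(x,\mathbf{X})A^{-1}$. Since $\hat f$ is the posterior mean (by \Cref{thm:inner_min} together with the Representer Theorem), it is linear in $\mathbf{Y}$: $\hat f(x) = \alpha(x)^\intercal \mathbf{Y} = \alpha(x)^\intercal \mathbf{f}(\mathbf{X}) + \alpha(x)^\intercal \boldsymbol{\epsilon}$. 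Denote the ``noiseless predictor'' by $\tilde f(x) = \alpha(x)^\intercal \mathbf{f}(\mathbf{X})$, so that $f(x)-\hat f(x) = \big(f(x)-\tilde f(x)\big) - \alpha(x)^\intercal \boldsymbol{\epsilon}$.

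The deterministic part is handled via the reproducing property. Writing $f(x)=\langle f, K(x,\cdot)\rangle_{\mathcal{H}_K}$ and $\tilde f(x) = \sum_i \alpha_i(x)\langle f, K(x_i,\cdot)\rangle_{\mathcal{H}_K}$, Cauchy--Schwarz gives
\begin{equation}
|f(x) - \tilde f(x)| \le \|f\|_{\mathcal{H}_K}\, \Big\|K(x,\cdot) - \sum_i \alpha_i(x) K(x_i,\cdot)\Big\|_{\mathcal{H}_K}.
\end{equation}
Expanding the right-hand norm using $\langle K(x_i,\cdot), K(x_j,\cdot)\rangle_{\mathcal{H}_K} = K(x_i,x_j)$ and substituting $K(\mathbf{X},\mathbf{X}) = A - \sigma_\epsilon^2 I$ yields
\begin{equation}
\Big\|K(x,\cdot) - \sum_i \alpha_i(x) K(x_i,\cdot)\Big\|_{\mathcal{H}_K}^2 = \sigma(x)^2 - \sigma_\epsilon^2\,\|K(x,\mathbf{X})A^{-1}\|_2^2 \le \sigma(x)^2,
\end{equation}
so $|f(x)-\tilde f(x)| \le \sigma(x)\|f\|_{\mathcal{H}_K}$. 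This is the main technical step and where I expect the only real obstacle: one must carefully track the $-\sigma_\epsilon^2 I$ contribution that distinguishes noisy kernel ridge regression from classical kernel interpolation, since without it one might mistakenly expect the residual norm to involve $K(\mathbf{X},\mathbf{X})^{-1}$ rather than $A^{-1}$.

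The stochastic part is then simple. Because $\alpha(x)^\intercal \boldsymbol{\epsilon}$ is a linear combination of independent $\mathcal{N}(0,\sigma_\epsilon^2)$ variables, it is Gaussian with mean zero and variance $\sigma_\epsilon^2 \|\alpha(x)\|_2^2 = \sigma_\epsilon^2 \|K(x,\mathbf{X})A^{-1}\|_2^2$. For the MSE bound I take expectations of the squared error; the cross term vanishes, giving $\text{MSE}(x) \le \sigma(x)^2\|f\|_{\mathcal{H}_K}^2 + \sigma_\epsilon^2 \|K(x,\mathbf{X})A^{-1}\|_2^2$. For the high-probability bound I apply the standard Gaussian tail inequality $\Pr(|Z|\ge t)\le 2\exp(-t^2/(2\mathrm{Var}(Z)))$ and invert to obtain $|\alpha(x)^\intercal \boldsymbol{\epsilon}| \le \sigma_\epsilon \|K(x,\mathbf{X})A^{-1}\|_2 \sqrt{2\log(2/\delta)}$ with probability $\ge 1-\delta$. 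Combining with the deterministic bound via the triangle inequality yields the claimed pointwise estimate.
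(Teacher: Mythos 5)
Your proposal is correct and follows essentially the same route as the paper's proof: the same deterministic/noise decomposition, the same reproducing-property and Cauchy--Schwarz argument for the deterministic part (your exact expansion of the residual norm, using $K(\mathbf{X},\mathbf{X}) = A - \sigma_\epsilon^2 I$ to get $\sigma(x)^2 - \sigma_\epsilon^2\|K(x,\mathbf{X})A^{-1}\|_2^2$, is precisely the content of the paper's auxiliary \Cref{lem:reproducing_property_example}), and the same variance computation and Gaussian tail bound for the stochastic part.
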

\begin{proof}
We assume that the true function $f$ lies in the Reproducing Kernel Hilbert Space (RKHS) $\mathcal{H}_K$ associated with a kernel $K(\cdot,\cdot)$. The reproducing property of the RKHS $\mathcal{H}_{\mathcal{K}}$ states that for any function $f\in\mathcal{H_K}$ and any point $x$, the evaluation of $f$ at $x$ may be expressed as an inner product with the kernel function $K(x,\cdot)$:
$$f(x)=	\langle f,K(x,\cdot)\rangle_{\mathcal{H}_{K}}.$$ We approximate $f$ by the regressor $\hat{f}\in\mathcal{H}_K,$ the kernel method solution to the optimal recovery problem. That is, $\hat{f}$ is the RKHS-minimal function satisfying 
\begin{align}
    \hat{f}(\cdot)&=K(\cdot,\mathbf{X})[K(\mathbf{X},\mathbf{X})+\sigma_{\epsilon}^2 I]^{-1}\mathbf{Y}.
\end{align}
By construction, we can equivalently express $\hat{f}$ as a linear combination of the kernel functions centered at the training points:
\begin{align}
    \hat{f}(\cdot)=\sum_{i=1}^N a_i K(x_i,\cdot),
\end{align}
for some finite coefficients $a_i,...,a_N.$ Solving for the optimal $a_i$ using arguments similar to those in the proof of \Cref{thm:inner_min}, we find $a=[K(\mathbf{X},\mathbf{X})+\sigma_{\epsilon}^2I]^{-1}\mathbf{Y}$. Rearranging, we obtain the form:
\begin{align}\label{eq:interpolant_innerproduct}
    \hat{f}(x)&=\phi(x)\cdot y,\nonumber\\
    &=\phi(x)\cdot[f(x)+\epsilon],\nonumber\\
    &=\sum_{i=1}^N \phi_i(x)f(x_i)+\sum_{i=1}^N \phi_i(x)\epsilon_i,
\end{align}
where $\phi(x)=K(x,\mathbf{X})[K(\mathbf{X},\mathbf{X})+\sigma_{\epsilon}^2I]^{-1}$ is a $1\times N$ vector of weights, $K(x,\mathbf{X})$ is a $1\times N$ vector with entries $K(x,x_i)$ for $i\in[N]$, and $K(\mathbf{X},\mathbf{X})$ is an $N\times N$ matrix with entries $K(x_i,x_j)$ for $i,j\in[N]$. 

Consider the pointwise error: $$e(x)=f(x)-\hat{f}(x),$$ which decomposes into a deterministic component and a stochastic noise component:
\begin{align}
    e(x)&=f(x)-\Bigg(\sum_{i=1}^N \phi_i(x)f(x_i)+\sum_{i=1}^N \phi_i(x)\epsilon_i\Bigg),\\
    &=\Bigg(f(x)-\sum_{i=1}^N \phi_i(x)f(x_i)\Bigg) - \sum_{i=1}^N \phi_i(x)\epsilon_i,\\
    &=e_{\text{det}}(x)-e_{\text{noise}}(x).
\end{align}
Since the $\epsilon_i$ are unknown, we evaluate the error in expectation by considering the mean squared error (MSE):
\begin{align}
    \text{MSE}(x)&=\mathbf{E}[e(x)^2]=\mathbf{E}\big[\big(e_\text{det}(x)-e_\text{noise}(x)\big)^2\big],\\
    &=e_\text{det}(x)^2+\text{Var}[e_\text{noise}(x)].\label{eq:first_MSE}
\end{align}
Since $e_\text{noise(x)}$ has zero mean, the cross term in \eqref{eq:first_MSE} vanishes.

To evaluate $e_{\text{det}}$, we rearrange and apply the reproducing property of the RKHS, substituting $f(x_i)=\langle f,K(x_i,\cdot)\rangle_{\mathcal{H}_{K}}$, to obtain a single inner product,
\begin{align}
    e_{\text{det}}&=\langle f,K(x,\cdot)\rangle_{\mathcal{H}_{K}}-\sum_{i=1}^N\phi_i(x)\langle f,K(x_i,\cdot)\rangle_{\mathcal{H}_{K}},\\
    &=\langle f,K(x,\cdot)\rangle_{\mathcal{H}_{K}}-\langle f, \sum_{i=1}^N\phi_i(x)K(x_i,\cdot)\rangle_{\mathcal{H}_{K}},\\
    &=\langle f,K(x,\cdot)\rangle_{\mathcal{H}_{K}}-\langle f, K(x,\mathbf{X})[K(\mathbf{X},\mathbf{X})+\sigma_{\epsilon}^2I]^{-1} K(\mathbf{X}, \cdot)\rangle_{\mathcal{H}_{K}},\\
    &=\langle f, K(x,\cdot)- K(x,\mathbf{X})[K(\mathbf{X},\mathbf{X})+\sigma_{\epsilon}^2I]^{-1} K(\mathbf{X}, \cdot)\rangle_{\mathcal{H}_{K}}.
\end{align}
Applying the Cauchy-Schwarz inequality in $\mathcal{H}_{K}$ yields
\begin{align}
     \lvert e_{\text{det}}\rvert &\leq \norm{f}_{\mathcal{H}_{K}}\cdot\norm{K(x,\cdot)- K(x,\mathbf{X})[K(\mathbf{X},\mathbf{X})+\sigma_{\epsilon}^2I]^{-1} K(\mathbf{X}, \cdot)}_{\mathcal{H}_{K}}.
\end{align}
The final step to bound $e_\text{det}$ is to compute the RKHS norm of the residual which we denote by $r_x(\cdot)=K(x,\cdot)- K(x,\mathbf{X})[K(\mathbf{X},\mathbf{X})+\sigma_{\epsilon}^2I]^{-1} K(\mathbf{X}, \cdot)$. We compute its squared norm by expanding the inner product:
\begin{align}
\norm{r_x}^2_{\mathcal{H}_{K}}& =\langle K(x,\cdot),K(x,\cdot)\rangle_{\mathcal{H}_{K}}-2\langle K(x,\cdot), K(x,\mathbf{X})[K(\mathbf{X},\mathbf{X})+\sigma_{\epsilon}^2I]^{-1} K(\mathbf{X}, \cdot)\rangle_{\mathcal{H}_{K}} \\
&\quad + \langle K(x,\mathbf{X})[K(\mathbf{X},\mathbf{X})+\sigma_{\epsilon}^2I]^{-1} K(\mathbf{X}, \cdot),K(x,\mathbf{X})[K(\mathbf{X},\mathbf{X})+\sigma_{\epsilon}^2I]^{-1} K(\mathbf{X}, \cdot)\rangle_{\mathcal{H}_{K}}.
\end{align}
By the reproducing property, we make the following simplifications:
\begin{align}
    \langle K(x,\cdot),K(x,\cdot) \rangle_{\mathcal{H}_{K}} &= K(x,x),\\
    \langle K(x,\cdot), K(x,\mathbf{X})[K(\mathbf{X},\mathbf{X})+\sigma_{\epsilon}^2I]^{-1} K(\mathbf{X}, \cdot)\rangle_{\mathcal{H}_{K}} &= K(x,\mathbf{X})[K(\mathbf{X},\mathbf{X})+\sigma_{\epsilon}^2I]^{-1} K(\mathbf{X}, x),\\
    \langle K(x,\mathbf{X})[K(\mathbf{X},\mathbf{X})+\sigma_{\epsilon}^2I]^{-1} K(\mathbf{X}, \cdot),K(x,\mathbf{X})[K(\mathbf{X},\mathbf{X})+\sigma_{\epsilon}^2I]^{-1} K(\mathbf{X}, \cdot)\rangle_{\mathcal{H}_{K}} &\leq K(x,\mathbf{X})[K(\mathbf{X},\mathbf{X})+\sigma_{\epsilon}^2I]^{-1} K(\mathbf{X}, x).
\end{align}
We provide a proof of the third relation in \Cref{lem:reproducing_property_example}.\footnote{For the noiseless case, the relation is an equality.}
Combining the above, we obtain:
\begin{align}
    \norm{r_x}^2_{\mathcal{H}_{K}} \leq K(x,x) - K(x,\mathbf{X})[K(\mathbf{X},\mathbf{X})+\sigma_{\epsilon}^2I]^{-1} K(\mathbf{X}, x).
\end{align}
The right hand side above is exactly the conditional variance of the GP. Thus, we  bound $e_\text{det}$ by
\begin{align}
    \lvert e_\text{det}\rvert \leq \sigma(x)\norm{f}_{\mathcal{H}_{K}},
\end{align}
where $\sigma(x)$ is the conditional standard deviation of the GP.

We next consider $\text{Var}[e_\text{noise}(x)]:$
\begin{align}
    \text{Var}[e_\text{noise}(x)]&=\sum_{i=1}^N\phi_i(x)^2\text{Var}[\epsilon_i],\\
    &=\sigma_{\epsilon}^2\norm{\phi}_2^2,\\
    &=\sigma_{\epsilon}^2\norm{K(x,\mathbf{X})[K(\mathbf{X},\mathbf{X})+\sigma_{\epsilon}^2 I]^{-1}}_2^2.
\end{align}
Combining terms, we obtain the desired relation:
\begin{align}
    \text{MSE}(x)&\leq \sigma(x)^2 \norm{f}^2_{\mathcal{H}_K}+\sigma_{\epsilon}^2\norm{K(x,\mathbf{X})[K(\mathbf{X},\mathbf{X})+\sigma_{\epsilon}^2 I]^{-1}}_2^2.
\end{align}

Alternatively, we bound the absolute pointwise error:
\begin{align}
    \lvert f(x)-\hat{f}(x)\rvert &= \lvert e(x)\rvert,\\
    &\leq \lvert e_\text{det}(x)\rvert + \lvert e_\text{noise}(x)\rvert,\label{eq:pointwise_triangle}
\end{align}
where \eqref{eq:pointwise_triangle} follows from the triangle inequality. To bound $\lvert e_\text{noise}(x)\rvert$, we use a Gaussian tail bound to address the unknown noise $\epsilon_i$. Recall that $e_\text{noise}(x)$ is a weighted sum of independent Gaussian random variables. Standard results for Gaussian concentration inequalities \cite{vershynin2018} tell us that:
\begin{align}
    \mathbb{P}\bigg[\lvert Z\rvert &\geq t\bigg]\leq 2\exp\Bigg(\frac{-t^2}{2\sigma^2}\Bigg)\quad\text{ for }Z\sim\mathcal{N}(0,\sigma^2).
\end{align}
Setting the right hand side equal to $\delta$ and solving for $t$ yields a probabilistic guarantee on our stochastic error term:
\begin{align}
\mathbb{P}\bigg[|e_\text{noise}|&\leq\sigma_{\epsilon}\norm{\phi(x)}_2\sqrt{2\log(2/\delta)}\bigg]\geq 1-\delta.
\end{align}
Hence, with probability $1-\delta$, the absolute pointwise error is bounded by
\begin{align}
    \lvert f(x)-\hat{f}(x)\rvert\leq \sigma(x)\norm{f}_{\mathcal{H}_K} +\sigma_{\epsilon}\norm{\phi(x)}_2\sqrt{2\log(2/\delta)}.
\end{align}

\end{proof}

\begin{remark}
The analysis here does not account for uncertainty stemming from the inferred input to the GPs, and thus assumes that these inferred variables are accurate. Recall that the interior vertex values (and thus the inputs to the GP) are unknown and are estimated via the process described in \Cref{sec:inference}. As a potential extension, this additional source of uncertainty could be incorporated by linearizing the problem around the maximum a posteriori (MAP) estimate, as in \cite{chen_gaussian_2024}.
\end{remark}

The error bounds in \Cref{lem:error} make the common assumption that $f\in\mathcal{H}_K$. 
In practice, the true function $f$ might lie outside of the RKHS associated with our chosen kernel $K(\cdot,\cdot)$. In this case, often referred to as model misspecification, the induced RKHS should be interpreted as a ``hypothesis space,'' and our approach returns the function $\hat{f}\in\mathcal{H}_K$ closest to the projection of $f$ onto $\mathcal{H}_K$. For most commonly used kernels, $\mathcal{H}_K$ is dense in the space of continuous functions, and is thus rich enough to provide a suitable approximation of $f$ \cite{steinwart}. Furthermore, \cite{fiedler_practical_2023} show that many types of model misspecification either do not affect error estimate bounds at all, or only affect them to a minor extent.

\section{Experiments}\label{sec:results}
We evaluate the proposed method on three datasets of increasing complexity: a toy electrical circuit, a discrete subsurface-fracture network, and a synthetic arterial blood flow graph. Besides covering a diverse set of application domains, each dataset poses a unique challenge for the approach. In each example, we train on 10 samples.

\subsection{Toy circuit}
We first present results for a simple linear circuit consisting of three edges in series (left panel of \Cref{fig:toy}). We sample boundary voltages i.i.d. from a centered Gaussian distribution and solve the resulting Poisson problems to obtain the corresponding edge currents. We train on 10 such solutions, and reserve an additional collection of boundary conditions for testing. 

\begin{figure}[h]
\centering
\includegraphics[width=0.85\linewidth]{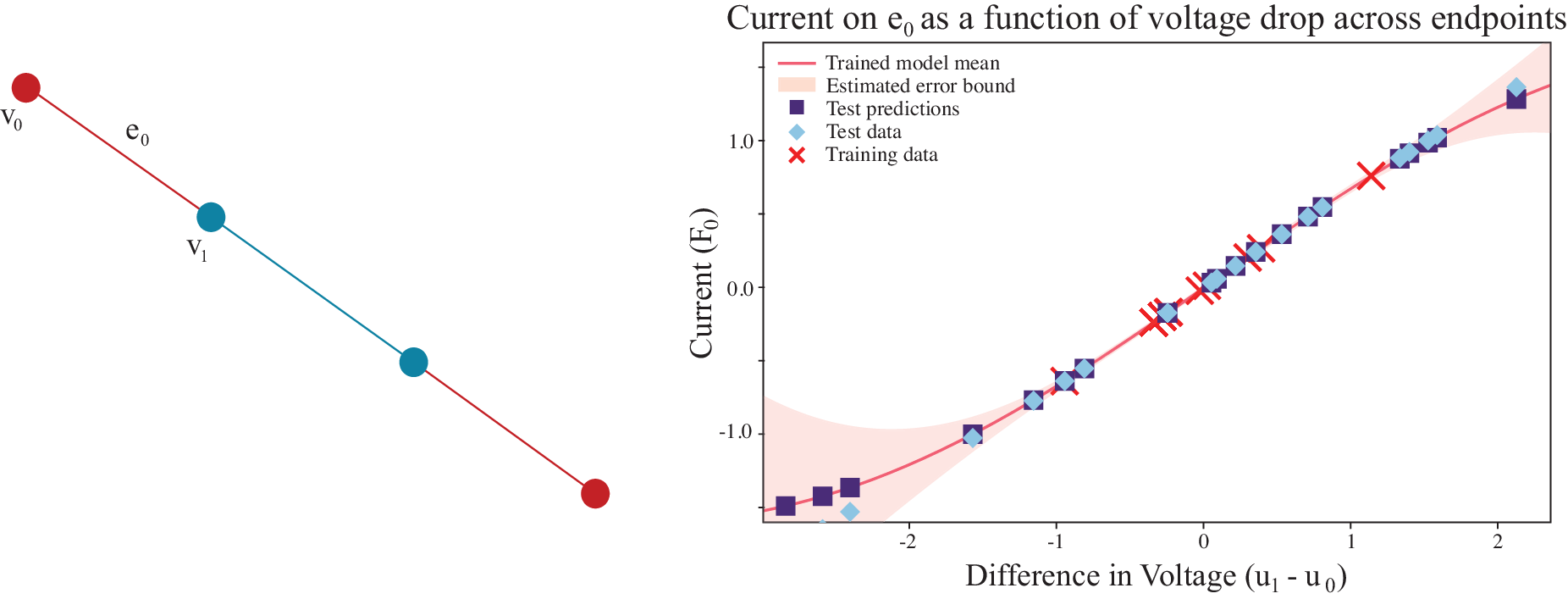}
\caption{Toy circuit problem. Left: a toy circuit with three edges in series. Right: inference results for leftmost boundary edge. Training points are marked in red, with the test predictions and target values in purple and blue, respectively. The pink shading indicates the estimated error bound from \Cref{lem:error} for $\delta=0.05,$ corresponding to $95\%$ confidence. Note that the error bound grows away from the cluster of training points.}\label{fig:toy}
\end{figure}

The primary output of interest from our approach is the learned Dirichlet-to-Neumann map: a set of edge-wise functions that map a pair of endpoint voltages $\mathbf{u}_e$ to a compatible current $\mathbf{F}_e$. Given a new set of boundary vertex values that were not seen during training, the model infers compatible values on interior vertices that yield physically feasible currents on the edges, following the inference strategy described in \Cref{sec:inference}. The right panel of \Cref{fig:toy} illustrates sample output for this process. Importantly, the bounds in \Cref{lem:error} provide an estimate of uncertainty along with the model predictions. In these and subsequent results, we demonstrate that the true test data consistently lies within the estimated 95\% confidence interval; in other words, our model is not only highly accurate, but also provides a trustworthy indication of where it may be more or less accurate.

\begin{figure}[h]
\centering
\includegraphics[width=0.6\linewidth]{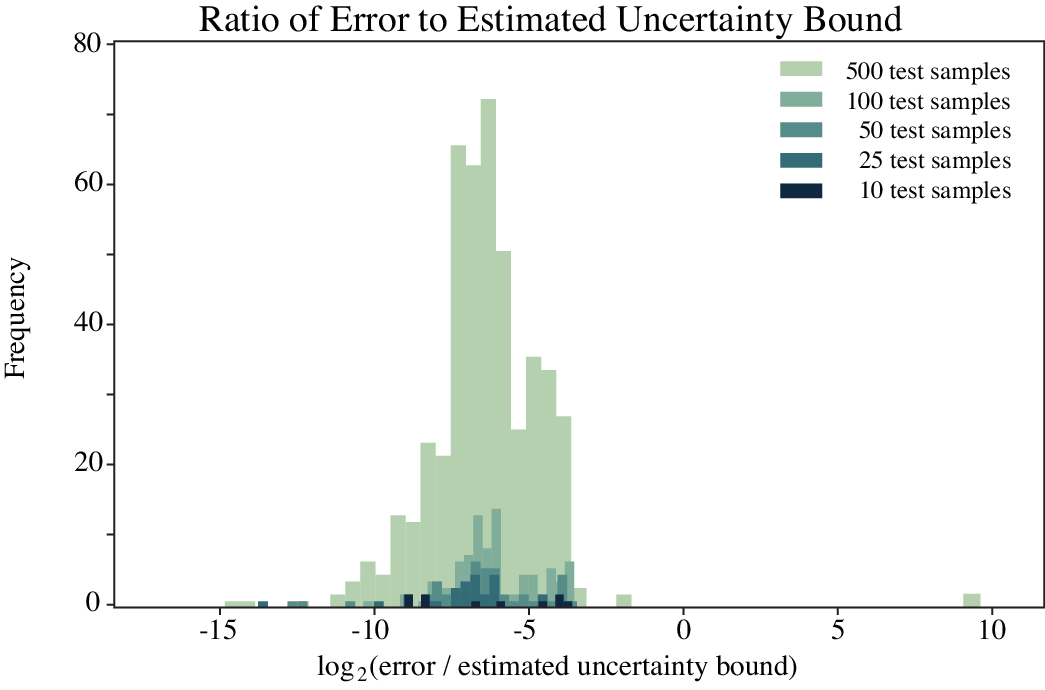}
\caption{Histogram of $\log_2(\lvert\text{error}\rvert/\lvert\text{bound}\rvert)$ for different test-set sizes (color-coded). Values below zero indicate that the estimated bound safely over-predicts the true error; only a few cases in the sample of size 500 lie above zero, as expected from the $95\%$ confidence level.} 
\label{fig:toy_scaling}
\end{figure}

\Cref{fig:toy_scaling} presents the results of a simple scaling test that demonstrates the behavior of our approach as the test set increases in size. Specifically, we show a histogram of the ratio of actual test error to the estimated error bound. The ratios concentrate between $[-3,-7],$ meaning that the true error is typically between 8 and 128 times smaller than the predicted worst-case bound. Only when 500 test points are sampled do a handful exceed the bound, an outcome consistent with the nominal failure probability $\delta=0.05$. In other words, the estimated error bound is slightly conservative for most realistic test sizes but remains probabilistically valid. 

\subsection{Subsurface fracture network problem}
Discrete fracture networks serve as preferential pathways for fluid flow and heat transport in subsurface systems. Understanding the dynamics of such systems is essential for applications such as carbon sequestration, geothermal energy, and hydraulic fracturing. However, observations of such systems are typically both sparse and noisy, rendering traditional approaches insufficient for proper study. Surrogate modeling with deep neural networks has been proposed as one potential approach to studying fracture network properties \cite{song_surrogate_2024}. In the current work, we explore Dirichlet-to-Neumann maps on graphs as an alternative approach.

We use the synthetic 2-D fracture network dataset of \cite{song_surrogate_2024}, generated by a physics-based Darcy-flow solver. 
The fracture geometry is generated according to the procedure detailed in \cite{delphine_fractures,song_surrogate_2024}, and is characterized by the percolation parameter, $p=9$, and aperture, $a=1.5$.  Fracture segments are designated as edges, and fracture intersections and terminations are designated as vertices. The resulting graph, depicted in \Cref{fig:DFN}, has $V=107$ vertices and $E=130$ edges, of which 17 are designated boundary boundary entities (nine injection sites with high hydraulic head and eight extraction sites with low hydraulic head).

We solve 256 realizations of steady-state, incompressible, single-phase Darcy flow, subject to mass conservation at every interior vertex and a hydraulic head drop $h_L\in[1,256]$ from the left-hand to the right-hand boundary. Because the volumetric flow rate follows the Poiseuille law and therefore depends on the gradient of hydraulic head rather than its magnitude, we opt to use the gradient of the endpoints, $\delta_0\mathbf{u}_e$ as the input to the GP for this problem rather than the raw endpoint values $\mathbf{u}_e$.

\begin{figure}
\centering
\includegraphics[width=0.95\linewidth]{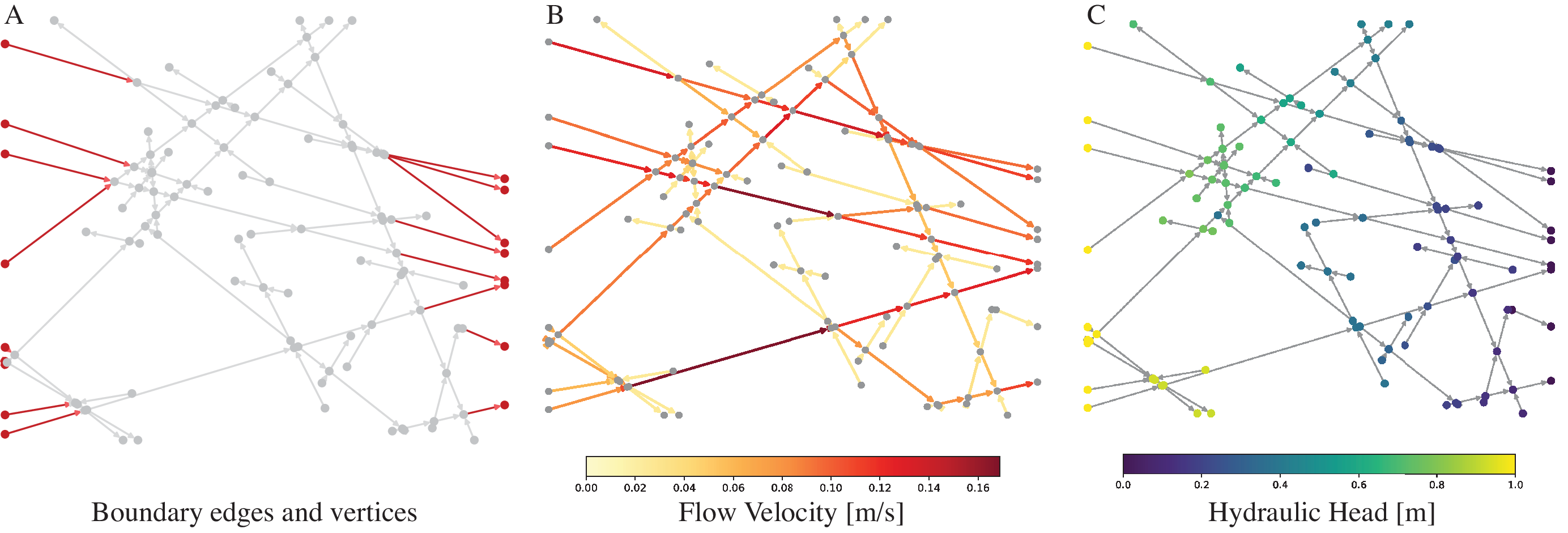}
\caption{Example subsurface fracture dataset. A) Graph topology with boundary edges and vertices in red; B) Flow velocity on edges; C) Hydraulic head on vertices. Flow proceeds predominantly along darker-red edges from yellow vertices (with high hydraulic head) to black vertices (with low hydraulic head).}\label{fig:DFN}
\end{figure}

\paragraph{Our approach recovers the Dirichlet-to-Neumann map on the boundary with high fidelity.}
\Cref{fig:DFN_results} presents a representative sample of results obtained for this dataset. Panel C shows that we achieve low levels of error in the flow velocity predictions on all boundary edges.\footnote{Note that the edges in the lower-left corner of \Cref{fig:DFN_results} (C) displaying higher error are not actually boundary edges, despite being located close to the boundary.} Panel F plots the learned functional map for a representative boundary edge. All 20 test predictions (blue squares) fall inside the $95\%$ credibility band, confirming well-calibrated uncertainty.

\paragraph{We obtain a physically meaningful prediction on the entire graph.} Despite only training on boundary data, our model recovers a physically meaningful and globally conservative prediction of hydraulic head and flow velocity across the entire domain. Panels A/B and D/E of \Cref{fig:DFN_results} compare predicted versus true hydraulic head (vertices) and flow velocity (edges). Discrepancies remain small and confined to interior cycles where multiple admissible flow configurations can satisfy the same boundary conditions (Panel C). This behavior is expected: global mass conservation enforces a unique potential field but may still allow localized flow permutations with negligible impact on boundary fluxes. 

\begin{figure}[h!]
\centering
\includegraphics[width=\linewidth]{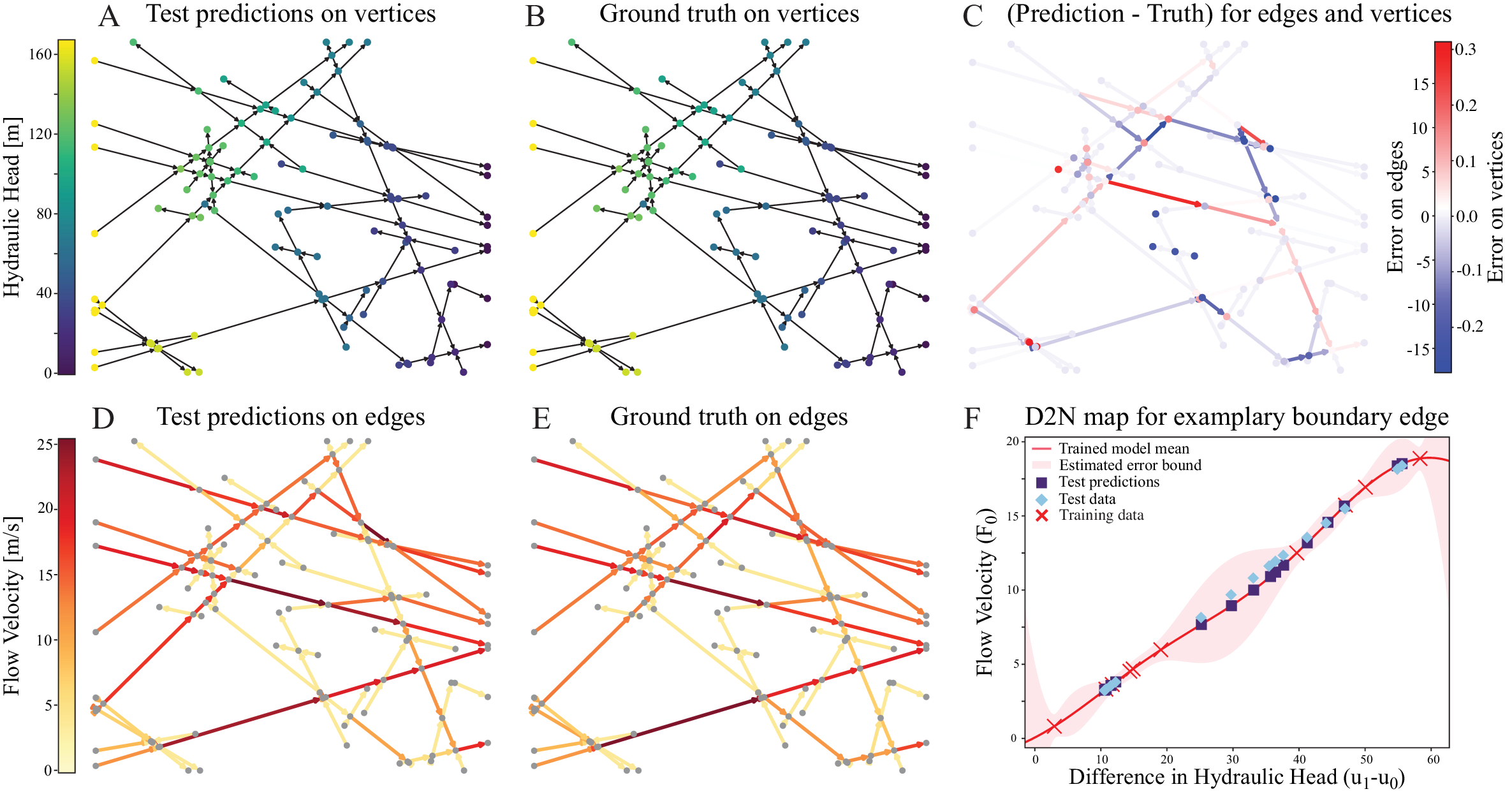}
\caption{Example results for the subsurface fracture network problem. A) Predicted hydraulic head; B) Ground truth hydraulic head; C) Error (true-predicted) on edges and vertices; D) Predicted flow velocity; E) Ground truth flow velocity; F) Inferred D2N relationship for boundary edge 25, with posterior mean in red, $95\%$ confidence band in pink, training data in red, test predictions in purple, and ground truth test values in blue.}
\label{fig:DFN_results}
\end{figure}

\newpage
\subsection{Arterial blood flow problem}
The high computational cost of building high fidelity models with full field patient-specific data makes arterial blood flow another appealing test case for this approach. Arterial blood flow is a challenging test case because it exhibits both nonlinear dynamics and potentially complicated geometry.
Our dataset consists of arterial graphs generated as training data for the GNN reduced order model developed in \cite{pegolotti_learning_2023}. Data generation involves first running the 3D hemodynamic simulation SimVascular \cite{updegrove2017simvascular}, then converting the data to a 1D representation and ultimately into a graph. The full data generation process is described in Section 4 of \cite{pegolotti_learning_2023}.

\begin{figure}[h!]
\centering
\includegraphics[width=0.7\linewidth]{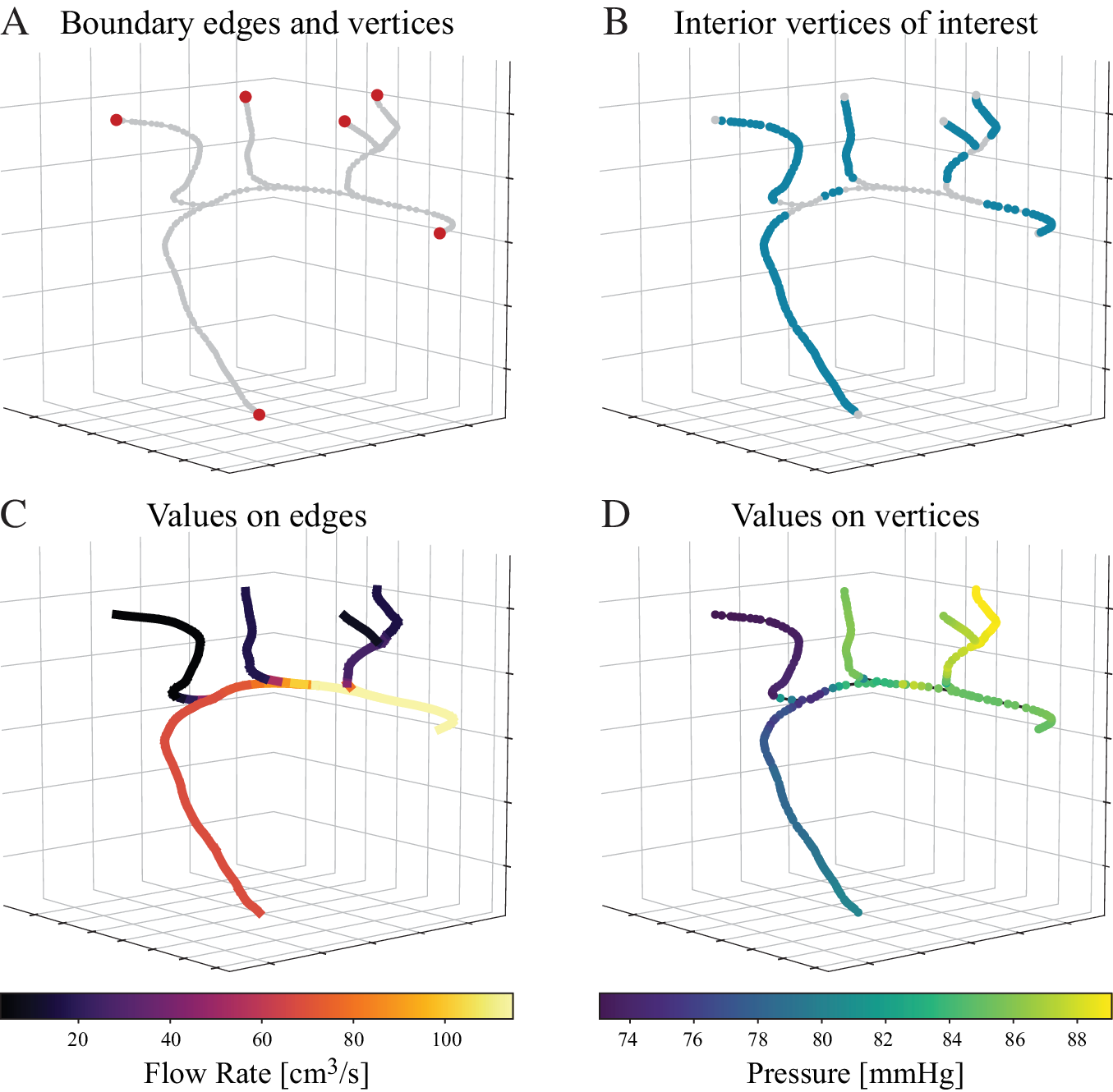}
\caption{Example arterial dataset. A) Topology of the dataset, highlighting boundary vertices and edges in red; B) Topology of the dataset, highlighting interior vertices and edges of interest in blue; C) Flow rate on graph edges; D) Pressure on graph vertices. Blood flows from the right-most boundary through the rest of the arterial branches.}\label{fig:artery}
\end{figure}

In the original dataset, vertices represent data collection sites for both pressure and flow rate. There are four types of vertices: inlet, outlet, junction, and branch vertices. We consider inlet and outlet vertices to be observed ``boundary'' vertices, and all other vertices to be unobserved ``interior'' vertices. Since we are interested in representing flow rate as values on edges, we average the flow rate of each edge's endpoint vertices to approximate the flow rate on that edge at each timestep.

Because the approach described in \cite{pegolotti_learning_2023} calculated the artery aperture using a cutting-plane method, the dataset can be nonphysical (and hence, non-conservative) in areas near intersections, since a single plane may cut more than one arterial branch. For this reason, when we consider results on the interior of the graph, we focus on the ``branch'' vertices where this data fidelity issue does not arise (highlighted in \Cref{fig:artery} Panel B).

\Cref{fig:artery} depicts an example arterial graph topology and an example dataset on this graph. The graph consists of 271 vertices and 270 edges, including six boundary vertices and edges. The simulation data consists of 424 timesteps, which we consider as individual data points.  Note that unlike the subsurface fracture network, this graph is a tree: for any pair of vertices in the graph, there is only one path connecting them. Thus, the arterial blood flow problem presents a very different challenge for our approach.

\paragraph{2D input to the GP is necessary for applications where compressibility and momentum play a role.} Unlike with subsurface Darcy flow, the behavior of arterial blood flow requires extra care for model setup. In particular, the flow rate on an edge depends not only on the pressure gradient across its endpoints, but also on the magnitude of the pressures and the phase of the cardiac cycle. For this reason, we opted to use a 2D input to the Gaussian process. Whereas in the previous two examples our GPs were a function of $\delta_0 \mathbf{u}_e$ (the gradient across the endpoint vertices of an edge), now they are a function of $\mathbf{u}_e$ (the raw values on both endpoints).

\begin{figure}[h!]
\centering
\includegraphics[width=\linewidth]{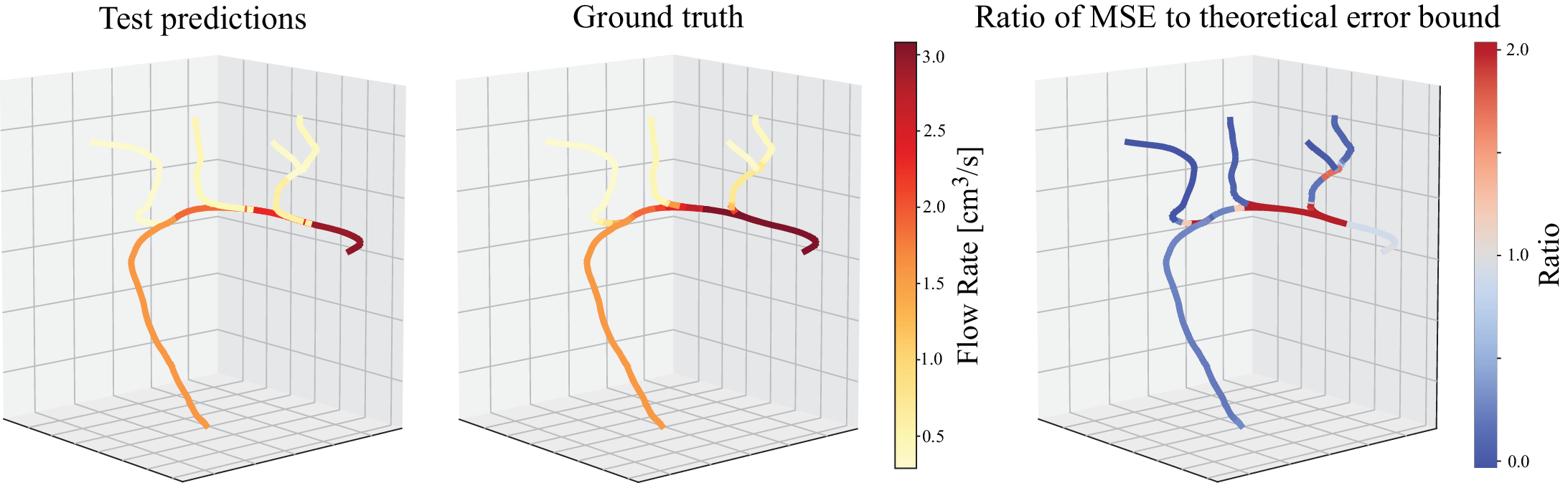}
\caption{Example results for the arterial blood flow problem. Left: test predictions on edges for an example test set of boundary conditions; Center: ground truth for the same set of boundary conditions; Right: ratio of realized error---calculated as mean squared error (MSE) across test samples---to the error bound from \Cref{lem:error}. We use a diverging color palette to highlight where the MSE is within the predicted levels of uncertainty. On the boundary and the interior vertices of interest (that is, where the ground truth is trustworthy---see \Cref{fig:artery}), the MSE is well within the predicted bound. }\label{fig:artery_results}
\end{figure}

\paragraph{Our approach returns a conservative model even when the data itself is not conservative.} The cutting-plane artifact in \cite{pegolotti_learning_2023} breaks local mass conservation near junctions; our model, by construction, enforces a globally conservative flow field that minimizes the RKHS norm subject to the boundary data. Consequently, the largest discrepancies between the predictions and truth appear exactly in those non-conservative regions, as visualized by the blue and red shading in the rightmost panel of \Cref{fig:artery_results}. For branch vertices---where the true data are conservative---errors are small and are captured by the estimated uncertainty bound. This is because our approach constructs a conservative model obeying the boundary data regardless of whether the ground truth data is actually locally conservative everywhere. It is also important to note that we expect our estimated uncertainty bound to underestimate the error on the interior due to the fact that the bound assumes the GP input values on the interior are accurate, when they are actually inferred estimates (see \Cref{sec:inference}).

\section{Discussion}

\subsection{Future work}

While our derivation ensures that we obtain an RKHS-minimal solution, it does not imply uniqueness. With graphs such as the discrete fracture network that involve cycles, the flow configuration satisfying a given set of boundary data is inherently non-unique. If a particular application requires a unique solution, there are a few possible approaches to ensuring this. For example, one might perform a Hodge decomposition of the flux, keeping only the curl-free component. This would, in theory, eliminate any indeterminacy caused by cycles without affecting boundary edges or other edges that are not involved in the cycles. It would also be interesting to explore the effect of adding observations at select vertices and edges on the interior of the graph. This could address the non-uniqueness issue in the case of cycles, and would also likely serve to further improve the accuracy of the coupled models.

We assumed small Gaussian noise in our observations. The effect of this assumption is the regularization of the kernel matrix with a small diagonal term $\sigma_{\epsilon}^2 I$, referred to as a ``nugget.'' For simplicity, we have assumed a uniform value for $\sigma_{\epsilon}^2,$ but in reality the spectral properties of the different edges of our graph (and thus the different blocks of the kernel matrix) may differ widely. An adaptive nugget term, such as that proposed by \cite{chen_solving_2021}, could more optimally balance accuracy and stability across the different edges of the graph.

Our current treatment does not account for the uncertainty stemming from the inferred input values to the GPs. We assume that the interior vertex values are unknown and estimated via the procedure described in \Cref{sec:inference}, but the error estimates provided in \Cref{sec:error} do not account for this. To incorporate this additional source of uncertainty, future work could explore linearizing the problem around the MAP, for example via a Laplace approximation as in \cite{chen_gaussian_2024}.

Finally, future work could explore learning the kernel $K(\cdot,\cdot)$ that minimizes the error bounds in \Cref{lem:error}. For example, this could involve an additional nested training loop minimizing the error bound over a family of candidate kernels. Recent work in this area has shown promising results using kernel flows, a variant of cross-validation \cite{learningkernels2,learningkernels}.

\subsection{Conclusion}
We have presented a novel approach to learning Dirichlet-to-Neumann maps on graphs that combines Gaussian processes and discrete exterior calculus. Our method produces robust uncertainty estimates along with its predictions, a desirable consequence of using Gaussian processes and the CGC framework as the backbone of our approach. By minimizing the RKHS norm with an MLE penalty, we ensure that our surrogate obeys our conservation law without excessive complexity. The tight match between prediction and ground truth in each of our experiments---together with rigorous error bounds---demonstrates that our combined DEC and CGC framework successfully learns a global Dirichlet-to-Neumann surrogate from limited boundary observations. In practice, the method can replace repeated sub-domain solves inside multiscale simulators while still providing trustworthy uncertainty estimates needed for verification and decision-making.

\section{Acknowledgements}
The authors appreciate the support from the Department of Energy under award number DE-SC0023163 (SEA-CROGS: Scalable, Efficient, and Accelerated Causal Reasoning Operators, Graphs, and Spikes for Earth and Embedded Systems). AMP acknowledges support from the Stanford Graduate Fellowship (SGF) and Stanford Enhancing Diversity in Graduate Education (EDGE) fellowship. HO acknowledges support from the Vannevar Busch Fellowship program (ONR award number N000142512035). The authors thank Delphine Roubinet and Luca Pegolotti for providing the data sets for the paper's numerical experiments.

\bibliographystyle{siamplain}
\bibliography{references}

\newpage
\appendix
\section{RKHS Spaces, Optimal GP Recovery, and the Representer Theorem}\label{app:RKHS}
For completeness, we repeat here the statement of \Cref{thm:inner_min} and provide its proof. Let $K: \mathcal{X} \times \mathcal{X} \rightarrow \R$ be a symmetric positive definite bivariate kernel and let $\mathcal{H}_K$ be the induced RKHS space with accompanying norm $\norm{\cdot}_K$. \Cref{thm:inner_min} is derived from the Representer Theorem and gives us a framework for solving the optimal recovery of a GP $f \in \mathcal{H}_K : \mathbf{X} \subset \mathcal{X} \mapsto \mathbf{Y} \subset \R$. 
\representer*
\begin{proof}
The minimization problem in \Cref{thm:inner_min} has a solution\footnote{In \eqref{eq:sol_rep_thm}, $K\left(\,\cdot\,\,, \mathbf{X} \right)$ represents the kernel function evaluated at the new test data and each of the training points in $\mathbf{X}$, and $V^*$ can be interpreted as a vector of coefficients weighting these training points. Thus, the function that minimizes the GP recovery objective lies in the span of the kernel evaluated at the training data points.} given by the Representer Theorem, that is
\begin{equation}\label{eq:sol_rep_thm}
f^*(\cdot) = K\left(\,\cdot\,\,, \mathbf{X} \right) V^*,
\end{equation}
where $V^*$ is the minimizer of  the problem
\begin{equation}
	V^* = \argmin_{V} \,\,  V^T K\left(\mathbf{X}, \mathbf{X} \right) V + \frac{1}{\sigma_{\epsilon}^2} \norm{K\left( \mathbf{X}, \mathbf{X} \right) V - \mathbf{Y}}_2^2.
\end{equation}

For simplicity, denote $\mathbf{K} = K(\mathbf{X},\mathbf{X})$.
Differentiating with respect to $V$, the minimizer $V^*$ solves
\begin{equation}
	2 \mathbf{K} V +\frac{2}{\sigma_{\epsilon}^2} \left( \mathbf{K}^T \mathbf{K} \mathbf{V} -  \mathbf{K} \mathbf{Y} \right) = 0,
\end{equation}
which has a solution
\begin{equation}
V^* = (\mathbf{K} + \sigma_{\epsilon}^2 I)^{-1} \mathbf{Y}.
\end{equation}
Thus,
$$f^*(\cdot) = K\left( \,\cdot\,\,, \mathbf{X} \right)  (\mathbf{K} + \sigma_{\epsilon}^2 I)^{-1} \mathbf{Y}.$$
Substituting this into the expression for $\ell(\mathbf{X};\mathbf{Y})$ yields
\begin{equation} \begin{split}
	\ell(\mathbf{X};\mathbf{Y}) &=  \norm{f^*}_K  + \frac{1}{\sigma_{\epsilon}^2} \norm{ f^*(\mathbf{X}) - \mathbf{Y} }_2^2\\
	&=  \mathbf{Y}^T (\mathbf{K}+\sigma_{\epsilon}^2 I)^{-1} \mathbf{K} (\mathbf{K}+\sigma_{\epsilon}^2 I)^{-1} \mathbf{Y} + \frac{1}{\sigma_{\epsilon}^2} \norm{ \mathbf{K} (\mathbf{K}+ \sigma_{\epsilon}^2 I)^{-1} \mathbf{Y} - \mathbf{Y} }_2^2 \\
&=  \mathbf{Y}^T (\mathbf{K}+\sigma_{\epsilon}^2 I)^{-1} \mathbf{K} (\mathbf{K}+\sigma_{\epsilon}^2 I)^{-1} \mathbf{Y}\\
	&\qquad\qquad + \frac{1}{\sigma_{\epsilon}^2} \mathbf{Y}^T (\mathbf{K}+\sigma_{\epsilon}^2 I)^{-1} \mathbf{K}\mathbf{K} (\mathbf{K}+\sigma_{\epsilon}^2 I)^{-1} \mathbf{Y} \\
	&\qquad\qquad - \frac{2}{\sigma_{\epsilon}^2} \mathbf{Y}^T \mathbf{K} (\mathbf{K}+\sigma_{\epsilon}^2 I)^{-1}\mathbf{Y}  + \frac{1}{\sigma_{\epsilon}^2} \mathbf{Y}^T \mathbf{Y}\\
& \stackrel{(i)}{=} \mathbf{Y}^T (\mathbf{K}+\sigma_{\epsilon}^2 I)^{-1} \mathbf{K} (\mathbf{K}+\sigma_{\epsilon}^2 I)^{-1} \mathbf{Y} \\
	&\qquad\qquad + \frac{1}{\sigma_{\epsilon}^2} \mathbf{Y}^T (\mathbf{K}+\sigma_{\epsilon}^2 I)^{-1} (\mathbf{K} + \sigma_{\epsilon}^2 I - \sigma_{\epsilon}^2 I) \mathbf{K} (\mathbf{K}+\sigma_{\epsilon}^2 I)^{-1} \mathbf{Y} \\
	&\qquad\qquad - \frac{2}{\sigma_{\epsilon}^2} \mathbf{Y}^T \mathbf{K} (\mathbf{K}+\sigma_{\epsilon}^2 I)^{-1} \mathbf{Y}  + \frac{1}{\sigma_{\epsilon}^2} \mathbf{Y}^T \mathbf{Y}\\
&= \mathbf{Y}^T (\mathbf{K}+\sigma_{\epsilon}^2 I)^{-1} \mathbf{K} (\mathbf{K}+\sigma_{\epsilon}^2 I)^{-1} \mathbf{Y} \\
	&\qquad\qquad + \frac{1}{\sigma_{\epsilon}^2} \mathbf{Y}^T \mathbf{K} (\mathbf{K}+\sigma_{\epsilon}^2 I)^{-1} \mathbf{Y} - \mathbf{Y} (\mathbf{K}+\sigma_{\epsilon}^2 I)^{-1} \mathbf{K} (\mathbf{K}+\sigma_{\epsilon}^2 I)^{-1} \mathbf{Y} \\
	&\qquad\qquad - \frac{2}{\sigma_{\epsilon}^2} \mathbf{Y}^T \mathbf{K} (\mathbf{K}+\sigma_{\epsilon}^2 I)^{-1} \mathbf{Y}  + \frac{1}{\sigma_{\epsilon}^2} \mathbf{Y}^T \mathbf{Y}\\
	&= -\frac{1}{\sigma_{\epsilon}^2} \mathbf{Y}^T \mathbf{K} (\mathbf{K}+\sigma_{\epsilon}^2 I)^{-1} \mathbf{Y} +\frac{1}{\sigma_{\epsilon}^2} \mathbf{Y}^T \mathbf{Y} \\
	& \stackrel{(ii)}{=} \frac{1}{\sigma_{\epsilon}^2} \left( - \mathbf{Y}^T (\mathbf{K} + \sigma_{\epsilon}^2 I - \sigma_{\epsilon}^2 I) (\mathbf{K}+\sigma_{\epsilon}^2 I)^{-1} \mathbf{Y} +  \mathbf{Y}^T \mathbf{Y} \right) \\
	&= \frac{1}{\sigma_{\epsilon}^2} \left( - \mathbf{Y}^T \mathbf{Y} + \sigma_{\epsilon}^2 \mathbf{Y}^T (\mathbf{K} + \sigma_{\epsilon}^2 I)^{-1} \mathbf{Y} +  \mathbf{Y}^T \mathbf{Y} \right) \\
&=  \mathbf{Y}^T (\mathbf{K} + \sigma_{\epsilon}^2 I)^{-1} \mathbf{Y}, \\
\end{split} \end{equation}
where (i) is obtained by adding and subtracting $\sigma_{\epsilon}^2 I$ in the second term, and (ii) is obtained by adding and subtracting $\sigma_{\epsilon}^2 I$ in the first term.
\end{proof}

We next prove an intermediate step for the proof of \Cref{lem:error}.
\begin{lemma}\label{lem:reproducing_property_example}
Let $\Omega\subseteq\mathbb{R}^d$, let $K:\Omega\times\Omega\rightarrow\mathbb{R}$ be a positive definite kernel with associated RKHS $\mathcal{H}_{K}$, and let $f$ be a true function from which we obtain data $(\mathbf{X},\mathbf{Y})=(x_i,y_i)_{i=1,...,N}$, where $f(x)=y.$ We denote by $K(x,\mathbf{X})$ a $1\times N$ vector with entries $K(x,x_i)$ for $i\in[N]$, and $K(\mathbf{X},\mathbf{X})$ an $N\times N$ matrix with entries $K(x_i,x_j)$ for $i,j\in[N]$. Then, by the reproducing property, we have:
\begin{align}
  \langle K(x,\mathbf{X})[K(\mathbf{X},\mathbf{X})+\sigma_{\epsilon}^2I]^{-1} K(\mathbf{X}, \cdot),K(x,\mathbf{X})[K(\mathbf{X},\mathbf{X})+\sigma_{\epsilon}^2I]^{-1} K(\mathbf{X}, \cdot)\rangle_{\mathcal{H}_{K}} &\leq K(x,\mathbf{X})[K(\mathbf{X},\mathbf{X})+\sigma_{\epsilon}^2I]^{-1} K(\mathbf{X}, x).
\end{align}
\end{lemma}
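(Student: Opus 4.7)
The plan is to recognize that the function inside the inner product is a finite linear combination of kernel sections, apply bilinearity and the reproducing property to reduce the inner product to a quadratic form, and then verify the resulting inequality by an algebraic identity that exposes the right-hand side minus the left-hand side as a manifestly non-negative squared norm.

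First, I would abbreviate $\alpha = K(x,\mathbf{X})[K(\mathbf{X},\mathbf{X})+\sigma_{\epsilon}^2 I]^{-1}$, a row vector of length $N$, so that the function appearing on both sides of the inner product is $g(\cdot) = \sum_{i=1}^{N} \alpha_i\, K(x_i,\cdot)\in\mathcal{H}_K$. By bilinearity of the inner product and the reproducing property $\langle K(x_i,\cdot),K(x_j,\cdot)\rangle_{\mathcal{H}_K}=K(x_i,x_j)$, the left-hand side becomes
\begin{equation*}
\langle g,g\rangle_{\mathcal{H}_K} = \sum_{i,j=1}^{N} \alpha_i \alpha_j K(x_i,x_j) = \alpha\, K(\mathbf{X},\mathbf{X})\, \alpha^\intercal.
\end{equation*}
Substituting back the expression for $\alpha$, the left-hand side equals $K(x,\mathbf{X})[K(\mathbf{X},\mathbf{X})+\sigma_{\epsilon}^2 I]^{-1} K(\mathbf{X},\mathbf{X}) [K(\mathbf{X},\mathbf{X})+\sigma_{\epsilon}^2 I]^{-1} K(\mathbf{X},x)$, whereas the right-hand side is $\alpha\,K(\mathbf{X},x) = K(x,\mathbf{X})[K(\mathbf{X},\mathbf{X})+\sigma_{\epsilon}^2 I]^{-1} K(\mathbf{X},x)$.

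The final step is the identity $K(\mathbf{X},\mathbf{X}) = \bigl(K(\mathbf{X},\mathbf{X})+\sigma_{\epsilon}^2 I\bigr) - \sigma_{\epsilon}^2 I$. Subtracting the left-hand side from the right-hand side and applying this identity yields
\begin{equation*}
\text{RHS}-\text{LHS} = \sigma_{\epsilon}^2\, K(x,\mathbf{X})[K(\mathbf{X},\mathbf{X})+\sigma_{\epsilon}^2 I]^{-1} [K(\mathbf{X},\mathbf{X})+\sigma_{\epsilon}^2 I]^{-1} K(\mathbf{X},x) = \sigma_{\epsilon}^2\,\bigl\lVert [K(\mathbf{X},\mathbf{X})+\sigma_{\epsilon}^2 I]^{-1} K(\mathbf{X},x) \bigr\rVert_2^2 \geq 0,
\end{equation*}
where I use that $[K(\mathbf{X},\mathbf{X})+\sigma_{\epsilon}^2 I]^{-1}$ is symmetric (so the middle factor splits into a matrix times its transpose). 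This establishes the desired inequality, with equality precisely when $\sigma_{\epsilon}=0$, matching the footnoted remark.

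There is no real obstacle here; the only conceptual point worth emphasizing is the reduction of the RKHS inner product of two sums of kernel sections to the quadratic form $\alpha K(\mathbf{X},\mathbf{X})\alpha^\intercal$, after which the inequality is purely linear-algebraic. The sign of $\sigma_{\epsilon}^2$ and the positive definiteness of $[K(\mathbf{X},\mathbf{X})+\sigma_{\epsilon}^2 I]^{-1}$ do all the work.
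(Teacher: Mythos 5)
Your proof is correct and follows essentially the same route as the paper's: expand the RKHS inner product via the reproducing property into the quadratic form $K(\mathbf{X},x)^\intercal A^{-1} K(\mathbf{X},\mathbf{X}) A^{-1} K(\mathbf{X},x)$ with $A = K(\mathbf{X},\mathbf{X})+\sigma_{\epsilon}^2 I$, then use $K(\mathbf{X},\mathbf{X}) = A - \sigma_{\epsilon}^2 I$ to identify the gap as $\sigma_{\epsilon}^2 K(\mathbf{X},x)^\intercal A^{-2} K(\mathbf{X},x) \geq 0$. Your explicit identification of that gap as $\sigma_{\epsilon}^2 \lVert A^{-1}K(\mathbf{X},x)\rVert_2^2$ via symmetry of $A^{-1}$ is a slightly more careful justification of the nonnegativity the paper simply asserts.
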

\begin{proof}
    Let $A=[K(\mathbf{X},\mathbf{X})+\sigma_{\epsilon}^2I]$. By the definition of the reproducing property, we have:
    \begin{equation}
    \begin{split}
          \langle K(x,\mathbf{X})A^{-1} K(\mathbf{X}, \cdot),K(x,\mathbf{X})A^{-1} K(\mathbf{X}, \cdot)\rangle_{\mathcal{H}_{K}} &= [A^{-1}K(\mathbf{X}, x)]^{\intercal} K(\mathbf{X},\mathbf{X})[A^{-1}K(\mathbf{X}, x)]\\
          &=K(\mathbf{X}, x)^{\intercal}A^{-1}K(\mathbf{X},\mathbf{X})A^{-1}K(\mathbf{X}, x).
    \end{split}
    \end{equation}
Then since $A=[K(\mathbf{X},\mathbf{X})+\sigma_{\epsilon}^2I]$, it is also true that:
$$K(\mathbf{X},\mathbf{X})=A-\sigma_{\epsilon}^2 I,$$ and therefore:
$$A^{-1}K(\mathbf{X},\mathbf{X})A^{-1}=A^{-1}-\sigma_{\epsilon}^2 A^{-2}.$$ Combining expressions, we obtain:
\begin{align}
     \langle K(x,\mathbf{X})A^{-1} K(\mathbf{X}, \cdot),K(x,\mathbf{X})A^{-1} K(\mathbf{X}, \cdot)\rangle_{\mathcal{H}_{K}} &= K(\mathbf{X}, x)^{\intercal}A^{-1}K(\mathbf{X}, x) - \sigma_{\epsilon}^2 K(\mathbf{X}, x)^{\intercal}A^{-2}K(\mathbf{X}, x),\\
     &\leq K(\mathbf{X}, x)^{\intercal}A^{-1}K(\mathbf{X}, x),
\end{align}
since the term $\sigma_{\epsilon}^2 K(\mathbf{X}, x)^{\intercal}A^{-2}K(\mathbf{X}, x)$ is nonnegative.
\end{proof}

\section{Block matrix formulation}\label{app:matrices}
In \eqref{eq:matrices}, we specify the matrices needed for the optimal solution of the unknown edge subproblem. We detail a subset of those matrices here. First, we have $F=\text{vec}(\mathbf{F}_\text{un})$, which is the row-major vectorization of $\mathbf{F}_\text{un}$:

\begin{equation}
    F = \begin{bmatrix}
        \mathbf{F}_{\text{un},(1,1)} &
        \hdots &
        \mathbf{F}_{\text{un},(1,N_\text{data})} &
        \mathbf{F}_{\text{un},(2,1)} &
        \hdots &
        \mathbf{F}_{\text{un},(2,N_\text{data})} &
        \hdots &
        \mathbf{F}_{\text{un},(E_\text{un},1)} &
        \hdots &
        \mathbf{F}_{\text{un},(E_\text{un},N_\text{data})} 
    \end{bmatrix}^\intercal.
\end{equation}
Next, we have $\delta_0,$ the incidence matrix or graph gradient matrix:

\begin{equation}
    \delta_0   =  \begin{bmatrix}
        \delta_{0 \mathcal{E}_\text{obs}, \mathcal{V}_\text{obs}} & \delta_{0 \mathcal{E}_\text{obs}, \mathcal{V}_\text{un}} \\
        \delta_{0 \mathcal{E}_\text{un}, \mathcal{V}_\text{obs}} & \delta_{0 \mathcal{E}_\text{un}, \mathcal{V}_\text{un}}
    \end{bmatrix}.
\end{equation}
We then take the Kronecker product of the bottom-right block, $D_0 = \delta_{0 \mathcal{E}_\text{un}, \mathcal{V}_\text{un}}$, with $I_{N_\text{data}}$ to obtain $\widehat{D_0}$:

\begin{equation}
    \widehat{D_0} = D_0 \otimes I_{N_\text{data}} = \begin{bmatrix}
        D_{0 (1,1)} I &  \hdots & D_{0 (1,V_{\text{un}})} I \\
        \vdots & \ddots & \vdots \\
        D_{0 (E_{\text{un}}, 1)} I & \hdots & D_{0, (E_{\text{un}}, V_{\text{un}})} I
    \end{bmatrix}     
\end{equation}

Finally, we form $\widehat{K},$ which is a block-diagnoal matrix of the Tikhonov-regularized kernel evaluations at each datapoint:

\begin{equation}
    \widehat{K} = \begin{bmatrix}
        \mathbf{K}_1^{-1} & 0 & \hdots & 0 \\
        0 &  
        \mathbf{K}_2^{-1} & \ddots & \vdots \\
        \vdots & \ddots & \ddots & 0 \\ 
        0 & \hdots & 0 &  
        \mathbf{K}_{E_\text{un}}^{-1}
    \end{bmatrix},
\end{equation}
with 

 \begin{equation}
     \mathbf{K}_i = \begin{bmatrix}
        K_{i}(\mathbf{u}_{i,1},  \mathbf{u}_{i,1})+\sigma_{\epsilon}^2 & K_i(\mathbf{u}_{i,1},  \mathbf{u}_{i,2}) & \hdots & K_i(\mathbf{u}_{i,1},  \mathbf{u}_{i,N_\text{data}}) \\
        K_i(\mathbf{u}_{i,2},  \mathbf{u}_{i,1}) &  
        K_{i}(\mathbf{u}_{i,2},  \mathbf{u}_{i,2})+\sigma_{\epsilon}^2 & \ddots & \vdots \\
        \vdots & \ddots & \ddots & \vdots \\ 
        K_i(\mathbf{u}_{i,N_\text{data}},  \mathbf{u}_{i,1}) & \hdots & \hdots &  
        K_{i}( \mathbf{u}_{i, N_\text{data}},  \mathbf{u}_{i,N_\text{data}})+\sigma_{\epsilon}^2
    \end{bmatrix},
\end{equation}
for $i\in\mathcal{E}_{\text{un}}$, where $K_i$ is the kernel learned for edge $i$\footnote{Recall that we learn a lengthscale parameter for each edge, thus defining a unique kernel on each edge of the graph.} and $\mathbf{u}_{i,j}$ denotes the $j$th datapoint for the endpoints of edge $i$.

\newpage
\section{Training details}\label{app:training}
In this section, we provide results pertaining to the training process. These results were generated with the discrete fracture network dataset. \Cref{fig:training_time_per_epoch} illustrates that per-epoch runtime grows approximately quadratically in the number of training samples. This is consistent with what we would expect given the dimensionality of the problem.

\begin{figure}[ht!]
\centering
\includegraphics[width=0.4\linewidth]{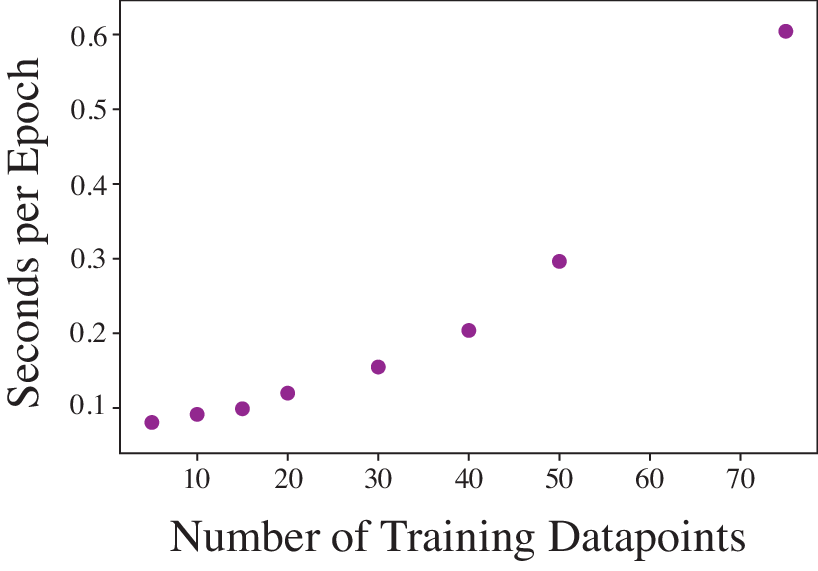}
\caption{Scaling of training time per epoch with number of training samples.}
\label{fig:training_time_per_epoch}
\end{figure}

\end{document}